\def\Equal{\texttt{=}}
\newtheorem*{rep@theorem}{\rep@title}
\newcommand{\newreptheorem}[2]{%
\newenvironment{rep#1}[1]{%
 \def\rep@title{#2 \ref{##1}}%
 \begin{rep@theorem}}%
 {\end{rep@theorem}}}
\newtheorem{theorem}{Theorem}[section]
\newtheorem{lemma}[theorem]{Lemma}
\theoremstyle{remark}
\newtheorem{definition}[theorem]{Definition}
\newtheorem{assumption}[theorem]{Assumption}
\newcommand{\graph}[1] {\ensuremath{\mathcal {#1}}}
\newcommand{\vars}[1] {\ensuremath{\mathbf {#1}}}
\newcommand{\mb}{\ensuremath{\textnormal{MB}}}
\newcommand{\var}[1] {\ensuremath{#1}}
\newcommand\CI{{\,\perp\mkern-12mu\perp\,}}
\newcommand\nCI{{\,\not\mkern-1mu\perp\mkern-12mu\perp\,}}
\newcommand{\hyp}[2] {{\ensuremath{H_{\mathbf{#1}}^{#2}}}\;}
\newcommand{\mang} {{\ensuremath{\mathcal {G}_{\overline X}}}\;}
\newcommand{\imby} {{\ensuremath{\textnormal{MB}_{\overline X}(Y)}}\;}
\newcommand{\cmby} {{\ensuremath{\textnormal{\textbf{CMB}}_{\overline X}(Y)}}\;}
\newcommand{\mby} {{\ensuremath{\textnormal{MB}(Y)}}\;}
\newcommand\mydots{\ifmmode\ldots\else\makebox[1em][c]{.\hfil.\hfil.}\fi}
\title{Causal Markov Boundaries}
\author{
  Sofia Triantafillou
  \and
  Fattaneh Jabbari
  \and 
  Greg Cooper
  }
 \date{}
\begin{document}
\maketitle

\begin{abstract}
Feature selection is an important problem in machine learning, which aims to select variables that lead to an optimal predictive model. In this paper, we focus on feature selection for post-intervention outcome prediction from pre-intervention variables. We are motivated by healthcare settings, where the goal is often to select the treatment that will maximize a specific patient's outcome; however, we often do not have sufficient randomized control trial data  to identify well the conditional treatment effect. We show how we can use observational data to improve feature selection and effect estimation in two cases: (a) using observational data when we know the causal graph, and (b) when we do not know the causal graph but have observational and limited experimental data. Our paper extends the notion of Markov boundary to treatment-outcome pairs. We provide theoretical guarantees for the methods we introduce. In simulated data, we show that combining observational and experimental data improves feature selection and  effect estimation.
\end{abstract}

\section{Introduction}\label{sec:intro}
Feature selection is a fundamental problem in machine learning that aims to select the minimal set of features that lead to the optimal prediction of a target variable \var Y. For observational distributions, this set is the Markov boundary of \var Y, $\mb(Y)$. In causal graphical models, this set can be identified from the causal graph \graph G \citep{Pearl2000}. This set  exhausts the predictive information for the state of a variable \var Y, and can be used to obtain the best (and minimal) predictive model $P(Y|\mb(Y))$ for \var Y\cite{}. 

\begin{table*}[]
    \centering
    \begin{tabular}{|p{0.35\textwidth}|p{0.55\textwidth}|}
    \hline
        Observational Markov Boundary (OMB) of \var Y: \mb(Y) & The Markov boundary of \var Y. Leads to optimal prediction of \var Y from observational data. \\ \hline
    Interventional Markov Boundary (IMB) of \var Y relative to \var X: \imby & The Markov boundary of \var Y in the post-intervention distribution $P_X$. Leads to optimal prediction of $\var Y_x$ from experimental data.\\ \hline
    Causal Markov Boundaries  (CMB) of \var Y relative to \var X: \cmby& Sets of measured variables that satisfy  Definition \ref{def:cmb}. Possibly not unique, and possibly empty. If not empty, one of the CMBs leads to the optimal prediction of $Y_x$ from observational data.\\\hline
\end{tabular}
    \caption{Different Markov boundaries discussed in this paper.}
    \label{tab:mbs}
\end{table*}
In decision making, we are often interested in finding the optimal predictive model for the post-intervention distribution of an outcome \var Y after we intervene on a treatment \var X, when we only have observational data. Ideally, we would like to include in our model the  Markov boundary \vars Z of \var Y in the post-intervention causal graph that is parameterized with the post-intervention distribution. 
However, under causal insufficiency in which latent confounding may exist, the conditional post-interventional distribution $P(Y|do(X), \vars Z)$ may not be identifiable. For example, in Fig. \ref{fig:notID_CMBs}, $P(Y|do(X), A, B)$ is not identifiable from the observational distribution alone. In this case, we are interested in identifying the optimal set \vars Z for which the post-intervention distribution $P(Y|do(X), \vars Z)$ is identifiable from observational data, which we call the causal Markov boundary. 

Moreover, even when experimental data are available, they typically have much smaller sample sizes and are not powered to identify conditional distributions. In that case,  we would like to combine large observational data with limited experimental data to improve interventional feature selection and effect estimation.

Our methods are heavily motivated by embedded clinical trials \citep{angus2015fusing, angus2020remap}, which take place within usual clinical care. In these trials, patients who agree to participate are randomized to receive a treatment from among those considered effective for that patient. The electronic health records (EHRs) of the health system in which the trial is being conducted contains both experimental data from the trial, and observational data obtained outside (e.g., before/after) the trial, all measuring the same variables. Combining observational and experimental data has the potential to better predict the most effective treatments for individual patients, than either type of data alone.

Our contributions are the following:
\begin{itemize} [leftmargin=*]
\item We define the interventional Markov boundary $\imby$ and the  causal Markov boundaries \cmby for an outcome \var Y and a treatment \var X. These sets correspond to the minimal set of covariates \vars Z that are maximally informative for $\var Y|do(X)$, from experimental and observational data, respectively (Sec \ref{sec:mb}). Table \ref{tab:mbs} summarizes the types of Markov boundaries discussed in this paper.\\
\item  We present a Bayesian method that combines observational and experimental data to learn interventional Markov boundaries. The method provides estimates of the post-interventional distribution that are based on both observational and experimental data, when possible (Sec. \ref{sec:fimb}), in which case the IMB is  a CMB. In simulated data, we show that our method improves causal effect estimation  (Sec. \ref{sec:experiments}).
\end{itemize}

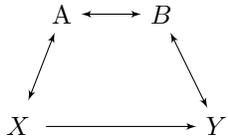
\begin{figure}[b]
\centering
\begin{tikzpicture}[>=latex']
\tikzstyle{every node}=[draw]
\node (X) [circle, draw =none] at (0,0){$X$};%
\node (phantom)[draw=none, above =1 of X]{};
\node (W1)[draw=none, right =0.2 of phantom]{A};

\node (W2)[draw=none, right =0.8 of W1]{$B$};
\node (Y)[draw=none, right =2 of X]{$Y$};

\path (X) edge[<->] (W1);
\path (W1) edge[ <->] (W2);
\path (W2) edge[<->] (Y);
\path (X) edge[ ->] (Y);
\end{tikzpicture}
\caption{\label{fig:notID_CMBs} An example SMCM. $\{X,A, B\}$ is the Markov boundary of $Y$ in \mang. $P(Y|do(X), A, B)$ is not identifiable from the observational distribution \var P, but $P(Y|do(X), A)$ and  $P(Y|do(X), B)$ are. $\{X, B\}$ is the causal Markov boundary for $Y_X$.}
\end{figure}

\section{Preliminaries}\label{sec:preliminary}
We use the framework of semi-Markovian causal models \citep[SMCMs, ][]{tian2003identification}, and assume the reader is familiar with related terminology. Variables are denoted in uppercase, their values in lowercase, and variable sets in bold. We use \graph G to denote a causal graph, and say \graph G induces a probability distribution \var P if \var P factorizes according to \graph G and the causal Markov condition.  

We use $Y|do(X)$ or $Y_X$ to denote a variable \var Y after the hard intervention on variable \var X. If we know the causal SMCM \graph G, a hard intervention of where a treatment $X$ is set to $x$  can be represented with the do-operator, $do(X\Equal x)$. We use $P_x$ or  to denote the interventional distribution over the same variables for $do(X\Equal x)$. In the corresponding graph, this is equivalent to removing all incoming edges into $\var X$, while keeping all other mechanisms intact.  We use  $\mang$ to denote the graph stemming from \graph G after removing edges into $X$. We use $\graph G_{\underline{X}}$ to denote the graph stemming from \graph G after removing edges out of \var X. We use the terms $Pa_\graph G(Z), Ch_\graph G(Z)$ to denote the set of parents and children of \var Z in \graph G, respectively. The set of variables that are connected with a variable \var Y through a bidirected path (i.e., a path that only has bidirected edges) is called the district of \var Y, and denoted $Dis_\graph G(Y)$.

\section{Markov Boundaries}\label{sec:mb}
A Markov blanket of a variable \var Y in a set of variables \vars V is a subset \vars Z of \vars V conditioned on which other variables are independent of \var Y: $\var Y \CI \vars V\setminus \vars Z|\vars Z$. The Markov boundary of \var Y is the Markov blanket that is also minimal (i.e., no subset of the Markov boundary is a Markov blanket) \citep{Pearl2000}. In distributions that satisfy the intersection property (including faithful distributions), the Markov boundary of a variable \var Y is unique \citep{Pearl1988}. To distinguish from other types of Markov boundaries defined in this work, we often use the terminology $\textbf{Observational Markov Boundary (OMB)}$ to denote the Markov boundary of a variable.

For a DAG \graph G, the OMB of a variable \var Y in any distribution faithful to \graph G is the set parents, children, and spouses of \var Y: $\mb(Y) = Pa_\graph G(Y)\cup Ch_\graph G(Y)\cup Pa_\graph G(Ch_\graph G(Y))$.
For SMCMs, it has been shown that the OMB of a variable \var Y is the set of parents, children, children’s parents (spouses) of \var Y, district of \var Y and districts of the children of \var Y, and the parents of each node of these districts \citep{richardson2003,Pellet}\footnote{\cite{Pellet} prove this for maximal ancestral graphs, but the proof can be readily adapted to SMCMs.}.

The OMB has been shown to be the minimal set
of variables with optimal predictive performance for a given  distribution and response variable, given some assumptions on the learner and the loss function \citep{tsamardinos2003towards}. In this work, we are interested in the model that gives  the optimal prediction of the post-intervention distribution, with the goal of designing optimal policies. For this reason, we are not interested in including post-intervention covariates in this model, because these variables are not known prior to treatment assignment, and thus, cannot affect the assignment. In the rest of this document, we make the following assumption:

\begin{assumption}\label{as:pretreatment}
Covariates \vars V are pre-treatment.
\end{assumption}

This simplifies the expressions for the  OMBs, because we no longer need to consider children of \var Y and their districts.
Knowing the OMB allows a more efficient representation of the conditional distribution of \var Y given \vars V, since the following equation holds:
\begin{equation}\label{eq:condprob}
    P(Y|\vars V) = P(Y|MB(Y)).
\end{equation}

\subsection{Interventional Markov Boundary}
Our goal is to identify the set of variables that lead to the optimal model for the post-intervention distribution of a target $Y$ relative to a specific treatment $X$.  We call this set the  \textbf{interventional Markov boundary (IMB)} of \var Y relative to \var X, and denote it \imby$\!$. Obviously, $\imby\subseteq\textnormal{MB(Y)}$. When we have data from the post-intervention distribution, we can apply statistical methods for OMB identification to obtain the IMB of \var Y relative to \var X. However, experimental data are often limited in sample sizes, while OMB identification methods may require large sample sizes.

If we know the causal graph \graph G, the  post-intervention distribution with respect to \var X is induced by the manipulated graph $\mang$. The IMB of \var Y is then the OMB of \var Y in $\mang$, and can be identified using the definition of the Markov Boundary above. However, the post-intervention distribution $P(Y|do(X), \imby\setminus X)$, may not be identifiable from the observational distribution. For example, in Fig. \ref{fig:notID_CMBs}, $\imby=\{X,A, B\}$, but $P(Y|do(X), A, B)$ is not identifiable from observational data. We then want to answer the following question: \emph{What is the best model for predicting $Y_X$ from the observational distribution, when the causal graph is known?}

\subsection{Causal Markov Boundaries}
To answer this question, we define the \textbf{causal Markov boundaries} of an outcome \var Y relative to a treatment \var X as follows:

\begin{definition}\label{def:cmb}
Let $\vars Z\subseteq (\vars V\cup X)$, and $\vars W=\vars Z\setminus X$. Then \vars Z is a causal Markov boundary (CMB) for \var Y relative to  \var X if it satisfies the following properties: 
\begin{enumerate}
    \item  $P(Y|do(X), \vars W)$ is identifiable from $P(X, Y, \vars V)$. 
    \item For every subset $\vars W'$ of $\vars V\setminus \vars W$ either $P(Y|do(X), \vars W, \vars W')$ $=P(Y|do(X),\vars W)$ or $P(Y|do(X), \vars W, \vars W')$ is not identifiable from $P(X, Y, \vars V)$.
    \item $\nexists \vars W' \subset \vars W$ s.t. $P(Y|do(X), \vars W') = P(Y|do(X), \vars W)$.
\end{enumerate}
\end{definition}
Condition (1) ensures that the post-intervention conditional probability of \var{Y_X} given a CMB is identifiable. Condition (2) states that the covariates that are not in that CMB are either redundant for the prediction of $Y_X$ given the CMB, or they make the post-intervention distribution non-identifiable. Condition (3) ensures that \vars Z is additionally maximally informative for $Y_X$ in the sense that you cannot remove any variable from \vars Z without losing some information for $Y_X$. This condition rules out sets like $\{X, A\}$ in Fig. \ref{fig:notID_CMBs}, where, while $P(Y|do(X), A)$ is identifiable from \var P, it is equal to $P(Y|do(X))$. Thus, conditioning on $\var A$ does not improve the prediction of $Y_X$ compared to its subset $\emptyset$. 

Notice that this definition does not capture the  spirit of Markov boundaries precisely: Markov boundaries make all remaining variables redundant for predicting $Y$; however, this does not necessarily hold with CMBs. For example, in Fig. \ref{fig:notID_CMBs},  $\{X, B\}$ is a CMB according to the definition above, but $\{A\}$ remains relevant for predicting $Y_X$; however, including it with $B$ in the CMB leads to non-identifiability.  

CMB is not necessarily unique; it is possible that multiple sets satisfy Definition \ref{def:cmb}. For example, assume the distribution \var P is induced by the SMCM shown in Fig. \ref{fig:CMBnds}. Both $\{X, B, C\}$ and $\{X, A, D\}$, satisfy Definition \ref{def:cmb}. The best predictive CMB for predicting $Y_X$ will depend on the parameters in \var P. We use the notation \cmby to denote the set of causal Markov boundaries of \var Y relative to \var X. Thus, we will generally need to find all CMBs and then determine which of them leads to the best prediction of $Y_X$.
Also, notice that the \cmby can be empty; thus, no subset of \vars V satisfies the Definition \ref{def:cmb}. This can happen for example if $X\rightarrow Y$ and $X\leftrightarrow Y$ in \graph G.

The CMB is useful in determining a minimal set of maximally predictive variables for which we can use observational data to predict post-interventional distributions.  In the next section we show that, for pre-treatment covariates, CMBs satisfy the back-door criterion and are subsets of the observational Markov boundary. These results enable more efficient algorithms for finding CMBs, limiting the types of estimators and the number of variable sets we need to consider.

\begin{figure}
\centering
\begin{tikzpicture}[>=latex']
\tikzstyle{every node}=[draw]
\node (X) [circle, draw =none] at (0,0){$X$};%
\node (A)[draw=none, above right =1 of X]{$A$};
\node (B)[draw=none, right =1 of A]{$B$};
\node (C)[draw=none, below right =1 of X]{$C$};
\node (D)[draw=none, right =1 of C]{$D$};
\node (Y)[draw=none, above right =1 of D]{$Y$};

\path (X) edge[<->] (A);
\path (A) edge[<->] (B);
\path (A) edge[->] (D);
\path (B) edge[ <->] (Y);
\path (X) edge[<->] (C);
\path (C) edge[<->] (D);
\path (D) edge[<->] (Y);
\path (C) edge[->] (B);
\path (X) edge[ ->] (Y);
\end{tikzpicture}
\caption{\label{fig:CMBnds} Causal Markov boundaries are not necessarily unique. Both $\{X, B, C\}$ and $\{X, A, D\}$ are causal Markov boundaries for \var Y relative to \var X.}
\end{figure}
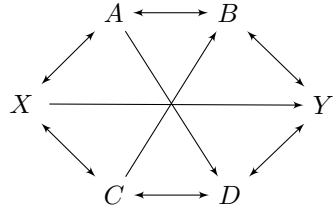
\subsubsection{Characterization}

Given a graph \graph G, \cite{shpitser2006a} provide a sound and complete algorithm (IDC) for estimating conditional post-intervention distributions from observational distributions induced by \graph G. The output of this algorithm  is an expression for $P(Y|do(X), \vars W)$ if the distribution is identifiable from distribution \var P and \graph G, or N/A otherwise. Thus, we can identify CMBs in a brute-force way by running IDC for every possible subset of \vars V, and then check for sets that satisfy the conditions in Def. \ref{def:cmb}. This process is  computationally expensive and would not be possible for graphs with more than a few variables. 

In this section, we provide theoretical results that lead to a much easier process when all candidate conditioning variables are pre-treatment (all proofs can be found in the supplementary). For pre-treatment covariates, one obvious family of sets for which the conditional post-intervention distributions are identifiable are sets  that m-separate $X$ and $Y$ in $\graph G_{\underline X}$. These sets satisfy Rule 2 of do-calculus \citep{Pearl2000}, so the conditional interventional distribution $P(Y|do(X), \vars W)$ is equal to the observational distribution $P(Y|X,\vars W)$. Sets of pre-treatment covariates that m-separate $X$ and $Y$ in $\graph G_{\underline X}$ are also known to satisfy the backdoor criterion \citep{van2014constructing} and the adjustment criterion \citep{shpitser2012validity}. However, these definitions are more general to include possible post-treatment covariates, and  are intended for estimating marginal post-intervention distributions (or average effects). For brevity, we will call sets that  m-separate $X$ and $Y$ in $\graph G_{\underline X}$ \textbf{backdoor sets}, since they block all back-door paths between \var X and \var Y. 

One question that arises is if there are sets that are not backdoor sets that may satisfy the conditions in Definition \ref{def:cmb}. In that case, identifiability could stem from some sequential application of do-calculus rules. As we show next, this is not possible for pre-treatment covariates. This ensures that we only need to check CMB Conditions (2) and (3) for sets for which $P(Y|do(X), \vars W) = P(Y|X,\vars W)$. This makes the identification of $P(Y|do(X), \vars W)$ more straightforward than having to compute more complex probability expressions.
 


 
\begin{theorem}\label{the:cmbbackdoor}
We assume that $P_x$ and $\graph G_{\overline X}$ are faithful to each other. If \vars Z is a CMB for \var Y relative to \var X, then $\vars Z\setminus X$ is a backdoor set for \var X relative to \var Y.
\end{theorem}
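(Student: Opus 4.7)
I will prove the contrapositive: assume $\vars Z$ satisfies Definition~\ref{def:cmb} but $\vars W = \vars Z \setminus X$ is not a backdoor set for $X$ relative to $Y$, and derive a contradiction with one of the three CMB conditions. Failure of the backdoor criterion gives an m-connecting path $\pi$ from $X$ to $Y$ given $\vars W$ in $\graph G_{\underline X}$. Since $\graph G_{\underline X}$ contains no edge out of $X$, the first edge of $\pi$ has an arrowhead at $X$, so it is $X \leftarrow A_1$ or $X \leftrightarrow A_1$ for some $A_1$. Assumption~\ref{as:pretreatment} prevents any element of $\vars W$ from being a descendant of $X$, so every collider on $\pi$ is unblocked only via a pre-treatment $\vars W$-ancestor.

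The argument then splits on whether $\pi$ can be closed by adding further pre-treatment variables. In the first case, some $\vars W' \subseteq \vars V \setminus \vars W$ makes $\vars W \cup \vars W'$ m-separate $X$ from $Y$ in $\graph G_{\underline X}$; Rule 2 of do-calculus then yields the identifiable expression $P(Y \mid do(X), \vars W, \vars W') = P(Y \mid X, \vars W, \vars W')$. Using the faithfulness of $P_x$ to $\graph G_{\overline X}$, I would show that the open path $\pi$ forces $Y \nCI \vars W' \mid \vars W$ in $P_x$, so $P(Y \mid do(X), \vars W, \vars W') \neq P(Y \mid do(X), \vars W)$, contradicting Condition (2). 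In the second case, no pre-treatment extension of $\vars W$ both closes $\pi$ and preserves identifiability; I would then extract a hedge for the conditional effect $P(Y \mid do(X), \vars W)$ from the bidirected-confounded portion of $\pi$ incident to $X$, following the construction of Shpitser and Pearl (2006). The completeness of IDC then rules out identifiability of $P(Y \mid do(X), \vars W)$, contradicting Condition (1).

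The main obstacle is the hedged case: turning an open path---a local graphical object---into two c-components constituting a hedge for a \emph{conditional} effect, while excluding every non-backdoor identification route produced by sequences of do-calculus rewrites. The pre-treatment restriction on $\vars W$ is central here, as it forbids front-door--style or mediator-based identifications that could otherwise rescue $P(Y \mid do(X), \vars W)$. The faithfulness of $P_x$ to $\graph G_{\overline X}$ is used throughout to lift graphical non-separations in the manipulated graph to distributional inequalities in the post-intervention distribution, so that graphical obstructions to identifiability translate into genuine distributional ones.
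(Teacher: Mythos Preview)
Your proposal is correct and takes essentially the same approach as the paper: the paper factors your Case~2 into a standalone lemma (if $P(Y\mid do(X),\vars W)$ is identifiable then $\vars W$ must be a subset of some backdoor set, proved by building a hedge from the bidirected $X$--$Y$ path guaranteed by \citet{richardson2002ancestral} when no such superset exists), and then handles your Case~1 exactly as you describe---pick $\vars Q$ extending $\vars W$ to a backdoor set, observe that some $Q\in\vars Q$ is a non-collider on an open backdoor path, hence $Y\nCI Q\mid\vars W$ in $\graph G_{\overline X}$, violating Condition~(2). One small sharpening: the hedge is not extracted from your particular open path $\pi$ but from the bidirected path whose existence follows from the failure of \emph{every} pre-treatment extension of $\vars W$ to m-separate $X$ and $Y$ in $\graph G_{\underline X}$; that global failure (not the single witness $\pi$) is what the Richardson result converts into the $\{Y,\vars W\}$-rooted C-forests $F,F'$.
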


The second theoretical result is that any CMB of \var Y relative to \var X is a subset of the OMB of \var Y. While this sounds intuitive, it is not completely straightforward. It could be the case that conditioning on every subset of the OMB opens some m-connecting path between \var X and \var Y, that can only be blocked by a variable that is not a member of the Markov boundary. The following theorem proves that this is not possible, allowing for more efficient search algorithms:

\begin{theorem}\label{the:cmbsubmb}
We assume that $P_x$ and $\graph G_{\overline X}$ are faithful to each other. Every CMB \vars Z of an outcome variable  \var Y w.r.t a treatment variable \var X is a subset of the OMB $\mb(Y)$.
\end{theorem}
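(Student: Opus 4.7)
My plan is a proof by contradiction. Suppose $\vars Z$ is a CMB that contains some $V \notin \mb(Y)$; I focus on the main case $V \in \vars W := \vars Z \setminus X$ (the degenerate case $V = X$ with $X \notin \mb(Y)$ dispatches separately, since then $Y \CI X \mid \mb(Y)$ already in $P$ and the conditioning on any backdoor set already renders $X$ redundant for predicting $Y_X$). Let $\vars W' := \vars W \setminus \{V\}$. The aim is to derive $P(Y \mid do(X), \vars W') = P(Y \mid do(X), \vars W)$, contradicting Condition~3 of Definition~\ref{def:cmb}.

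By Theorem~\ref{the:cmbbackdoor}, $\vars W$ is a backdoor set, so Rule~2 of do-calculus gives $P(Y \mid do(X), \vars W) = P(Y \mid X, \vars W)$. The rest of the argument splits into two parts: (a) show that $\vars W'$ is also a backdoor set, which yields $P(Y \mid do(X), \vars W') = P(Y \mid X, \vars W')$; and (b) show $Y \CI V \mid X, \vars W'$ in the observational distribution $P$, which yields $P(Y \mid X, \vars W) = P(Y \mid X, \vars W')$. Chaining these three equalities gives the contradiction.

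For both parts the key structural input is the SMCM characterization of the OMB under Assumption~\ref{as:pretreatment}, namely $\mb(Y) = Pa_\graph G(Y) \cup Dis_\graph G(Y) \cup Pa_\graph G(Dis_\graph G(Y))$. Since $V \notin \mb(Y)$, the variable $V$ is not adjacent to $Y$, is not connected to $Y$ by a purely bidirected path, and has no directed edge with tail at $V$ into any district-member of $Y$. I would then inspect any back-door path $\pi$ from $X$ to $Y$ in $\graph G_{\underline X}$ that contains $V$: the endpoint $U$ of $\pi$ adjacent to $Y$ must lie in $\mb(Y)$, and walking $\pi$ backwards toward $X$ one locates either a non-collider in $\vars W \setminus \{V\}$ that already blocks $\pi$, or a collider on $\pi$ whose descendants lie outside $\vars W$, so that $\pi$ remains blocked after $V$ is removed from the conditioning set; this delivers~(a). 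A parallel m-separation analysis in $\graph G$, for paths from $V$ to $Y$ given $\{X\} \cup \vars W'$, delivers~(b): any such path must leave $V$ via an edge with arrowhead at $V$ (making $V$ or a nearby node a collider), which is then either unconditioned-on or blocked by the backdoor properties of $\vars W$ already established.

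The main obstacle I expect is the graphical case analysis within the SMCM framework: the careful handling of bidirected edges, of colliders whose descendants may live inside $\vars W$, and the simultaneous juggling of $\graph G_{\underline X}$ (relevant for part~(a)) and $\graph G$ (relevant for part~(b)). A cleaner alternative I would pursue in parallel is a ``swap'' lemma asserting that $\vars W \cap \mb(Y)$ is itself a backdoor set with $P(Y \mid X, \vars W \cap \mb(Y)) = P(Y \mid X, \vars W)$; combined with Condition~3 of Definition~\ref{def:cmb}, this would immediately force $\vars W \subseteq \mb(Y)$ and bypass the one-variable-at-a-time removal argument entirely.
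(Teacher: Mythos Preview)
Your strategy has a genuine gap: you attempt to derive a contradiction exclusively through Condition~(3) of Definition~\ref{def:cmb} (by removing $V$ from $\vars W$), but this route is not always available. Both of your claims (a) and (b) can fail. The underlying issue is that $V\notin\mb(Y)$ only guarantees $Y\CI V\mid \mb(Y)$; it says nothing about $Y\CI V\mid X,\vars W'$ when $\vars W'$ need not contain $\mb(Y)$. Concretely, take the SMCM with edges $X\leftarrow V\rightarrow A\rightarrow Y$ and $X\rightarrow Y$. Here $\mb(Y)=\{X,A\}$, so $V\notin\mb(Y)$, and $\vars W=\{V\}$ is a backdoor set. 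But $\vars W'=\emptyset$ is \emph{not} a backdoor set (the path $X\leftarrow V\rightarrow A\rightarrow Y$ is open), so (a) fails; and $Y\nCI V\mid X$ along $V\rightarrow A\rightarrow Y$, so (b) fails. Your sketched justification for (b) (``any such path must leave $V$ via an edge with arrowhead at $V$'') is simply wrong: nothing about $V\notin\mb(Y)$ forces arrowheads at $V$. Your alternative ``swap'' lemma asserting that $\vars W\cap\mb(Y)$ is still a backdoor set with the same conditional suffers the identical defect, as the same example shows ($\vars W\cap\mb(Y)=\emptyset$).

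The paper's proof handles exactly this situation by splitting into two cases. If no $Q\in\vars W\setminus\mb(Y)$ is m-connected to $Y$ given $\vars W\setminus Q$ in $\graph G_{\overline X}$, then indeed $P(Y\mid do(X),\vars W)=P(Y\mid do(X),\vars W\setminus\vars Q)$ and Condition~(3) fails, as you argue. But if some such $Q$ \emph{is} m-connected (as $V$ is in the example above), the paper invokes a separate lemma: on the connecting path there must be a non-collider $W\in\mb(Y)\setminus\vars Z$, and one shows that $\vars Z\cup\{W\}$ remains a backdoor set while $W\nCI Y\mid \vars Z$ in $\graph G_{\overline X}$. Hence $P(Y\mid do(X),\vars W,W)\neq P(Y\mid do(X),\vars W)$ is identifiable, violating Condition~(2). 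In the example, $W=A$ plays this role. So the missing idea is that the contradiction may have to come from \emph{adding} a variable in $\mb(Y)$ (Condition~(2)) rather than from removing one outside it (Condition~(3)).
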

Based on Theorem \ref{the:cmbsubmb}, we only need to look for CMBs within subsets of the OMB of \var Y. 
So far, we have shown that both the IMB and any CMB are subsets of the OMB. We can also show that when the IMB is a CMB, then it also coincides with the OMB:

\begin{theorem}\label{the:imbcmbismb}
If \imby is a causal Markov boundary, then $\imby=\mb(Y)$.
\end{theorem}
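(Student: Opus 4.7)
Because \imby is assumed to be a CMB, Theorem~\ref{the:cmbsubmb} already yields $\imby \subseteq \mby$, so the plan is to prove the reverse inclusion $\mby \subseteq \imby$ by contradiction. I would suppose $V \in \mby \setminus \imby$ and split on how $V$ enters $\mby$. If $V \in Pa_\graph{G}(Y)$, then since the intervention removes only edges incident to $X$, $V \ne X$ remains a parent of $Y$ in \mang, placing $V$ in \imby and contradicting the choice of $V$. The remaining possibilities are $V \in Dis_\graph{G}(Y) \setminus \{Y\}$ or $V \in Pa_\graph{G}(W)$ for some $W \in Dis_\graph{G}(Y)$.

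In either district case, $V \notin \imby$ forces the relevant district membership to collapse in \mang: if $V$ (respectively $W$, when $W \neq X$) were still in $Dis_{\mang}(Y)$, the MB formula applied to \mang would put $V$ back into \imby. So some bidirected path from $V$ (or from $W$) to $Y$ in $\graph G$ must fail to survive in \mang, and since only $X$-incident edges are removed, such a path must route through $X$ as an internal node; in the degenerate sub-case $W = X$, the condition $X \in Dis_\graph{G}(Y)$ directly supplies such a path. Extracting the $X$-to-$Y$ tail yields a bidirected path $X = U_0 \leftrightarrow U_1 \leftrightarrow \cdots \leftrightarrow U_k = Y$ in $\graph G$, which is a back-door path and is also present in $\graph G_{\underline X}$.

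The main step --- and the main obstacle --- is showing that this back-door path is \emph{not} blocked by $\imby \setminus X$, contradicting Theorem~\ref{the:cmbbackdoor}. Expanding each bidirected edge $U_{j-1} \leftrightarrow U_j$ through its latent confounder $L_j$ produces non-colliders at the latents and colliders at each observed internal $U_j$ ($1 \le j \le k-1$). The $L_j$ are unobserved and so never block. For each collider $U_j$, the tail $U_j \leftrightarrow U_{j+1} \leftrightarrow \cdots \leftrightarrow Y$ uses no $X$-incident edge and is preserved in \mang, so $U_j \in Dis_{\mang}(Y) \subseteq \imby$; since $U_j \neq X$, it sits in $\imby \setminus X$ and activates the collider. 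With every collider activated and no non-collider blocking, the path is m-connecting given $\imby \setminus X$ --- contradicting the backdoor property. The collider-activation step is clean once the path is extracted; the delicate part is the district-collapse bookkeeping of the previous paragraph, especially the separate handling of the $W = X$ sub-case.
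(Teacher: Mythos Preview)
Your proposal is correct and follows essentially the same route as the paper: from a hypothetical $V\in\mby\setminus\imby$ you extract a bidirected path $X\leftrightarrow\cdots\leftrightarrow Y$ in $\graph G$, observe that its internal colliders lie in $Dis_{\mang}(Y)\subseteq\imby\setminus\{X\}$, and conclude that $\imby\setminus\{X\}$ cannot be a backdoor set, contradicting Theorem~\ref{the:cmbbackdoor}. Your three-way case split on how $V$ enters $\mby$ (parent of $Y$, district member, parent of a district member, with the $W=X$ sub-case) is more explicit than the paper's compressed argument, which jumps directly to the bidirected-path conclusion, but the underlying mechanism is identical.
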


\section{Combining observational and experimental data}\label{sec:fimb}
When the causal graph is known, we can obtain CMBs by looking for subsets of \mby that satisfy Def. \ref{def:cmb}. Unfortunately, in most real-world applications, the true graph is unknown, and selecting the causal/interventional Markov boundary is not possible from observational data alone. Experimental data may exist, but are typically much fewer than observational data, due to expense or  ethical concerns. This scenario is  common in embedded trials, where non-randomized patients are much more common than trial participants. 
In such cases, the experimental data may be under-powered to accurately estimate conditional effects. As a result, the conditional effects that can be derived from the experimental data have high variance and may not be reliable. In this case, combining all data (observational and experimental) in a Bayesian manner may help improve the prediction of $\var Y_x$.  

We assume that we have observational data $D_o$ and experimental data $D_e$ measuring treatment \var X, outcome \var Y, and pre-treatment covariates \vars V. We use $N_o, N_e$ to denote the number of samples in $D_o, D_e$, respectively.

We present a Bayesian method, called \FIMB, that uses both $D_e$ and $D_o$ to estimate the probability of a set being the $\imby\!$, and estimate $P(Y|do(X),\vars V) = P(Y|do(X),\imby\setminus X)$. The method is presented in Alg. \ref{algo:findIMB}. The method first estimates the OMB of \var Y in observational data \mby (Line 1), and then looks among subsets of \mby for sets that are IMBs (Line 2). It uses $D_e$ and $D_o$ to evaluate the probability that a set is an IMB (Line 3), and then returns a weighted average for $P(Y|do(X), \vars V)$ based on these probabilities (Line 5). 

The enabling idea of the method is that, when the IMB is a CMB, we can use both the $D_o$ and $D_e$ to estimate the conditional post-intervention distribution. Otherwise, we use only $D_e$ to derive the estimate. We use the following notation to express these hypotheses:
\begin{itemize}
    \item $\hyp{\vars Z}{c}$ is a binary variable denoting the hypothesis that \vars Z is the IMB $\imby$, and it is also  a CMB:  $\vars Z = \imby\wedge\vars Z\in \cmby$.
    \item $\hyp{\vars Z}{\overline{c}}$ is a binary variable denoting the hypothesis that \vars Z is the IMB $\imby$, but it is not a CMB: $\vars Z = \imby\wedge\vars Z\not\in \cmby$.
\end{itemize}
For a set $\vars Z^\star$, if either  $\hyp{\vars Z^\star}{c}$ or $\hyp{\vars Z^\star}{\overline c}$ is true, $\vars Z^\star$ is an IMB and therefore $P(Y|do(X), \vars V) = P(Y|do(X), \vars Z^\star\setminus X)$. Under $\hyp{\vars Z^\star}{c}$ though, $\vars Z^\star$ is also a CMB and therefore the pre- and post- intervention distributions are the same, i,e,
\begin{equation}\label{eq:ppequality}
    P(Y|do(X), \vars Z^\star\setminus X, \hyp{Z^\star}{c}) =  P(Y|X, \vars Z^\star\setminus X)
\end{equation}

In contrast, under \hyp{Z^\star}{\overline{c}}, Eq. \ref{eq:ppequality} does not hold: $\vars Z^\star$ is the IMB, but not a CMB. This means that $P(Y|do(X), \vars Z^\star\setminus X)$ is not identifiable from observational data, and we cannot use $D_o$ to estimate $P(Y|do(X), \vars Z^\star\setminus X)$\footnote{Notice however that $D_o$ may still place some constraints on $P(Y|do(X), \vars Z^\star\setminus X)$, like for example provide bounds.}. \emph{In summary, if either of $\hyp{Z^\star}{c} or \hyp{Z^\star}{\overline c}$ holds, $\vars Z^\star$ is the IMB and $P(Y|do(X), \vars V) = P(Y|do(X),  \vars Z^\star\setminus X)$. If $\hyp{Z^\star}{c}$ holds, we can use both $D_o$ and $D_e$ in our estimation of $P(Y|do(X),  \vars Z^\star\setminus X)$, while if $\hyp{Z^\star}{\overline c}$ holds we can only use $D_e$.} 

Based on this observation, we want to compute $P(\hyp{Z}{c}|D_e, D_o)$ and $P(\hyp{Z}{\overline c}|D_e, D_o)$ for possible IMBs \vars Z. These probabilities tell us both how likely it is that \vars Z is an IMB (their sum), and if we can include observational data in the estimation of $P(Y|\do(X), \vars V)$. Using  Bayes rule, we obtain: 
\begin{equation}\label{eq:score}
\begin{split}
& P(\hyp{Z^\prime}{c}|D_e, D_o) =\\ 
& \frac{P(D_e|D_o,\hyp{Z^\prime}{c} )P(D_o|\hyp{Z^\prime}{c})P(\hyp{\vars Z{^\prime}}{c})}{\displaystyle \sum_{\vars  Z}{\sum_{C=c, \overline c} P(D_e|D_o,\hyp{\vars Z}{C})P(D_o|\hyp{\vars Z}{C})P(\hyp{\vars Z}{C})}}.    
\end{split}
\end{equation}
We can similarly derive $P(\hyp{\vars Z}{\overline c}|D_e, D_o)$ by replacing each appearance of $c$  with $\overline c$ in the numerator. The denominator is the same for all sets. $P(\hyp{\vars Z{^\prime}}{c})$  and $P(\hyp{\vars Z{^\prime}}{\overline c})$ is our prior that $\hyp{\vars Z{^\prime}}{c}$ or $\hyp{\vars Z{^\prime}}{\overline c}$ holds. We set this to be uniform over both values of $C$ and all $\vars Z$.

As Eq. \ref{eq:score} shows, using Bayes rule, we can estimate the posterior probabilities for the set of hypotheses $\hyp{Z}{c}$ and $\hyp{Z}{\overline c}$ using marginal likelihoods of the experimental and observational data. In the next sections, we show how we can compute each term in Eq. \ref{eq:score}.

We present our results for multinomial distributions, but we believe the method can be readily extended to  any type of distribution with closed-form marginals. Due to space constraints, the closed-form solution for each equation appearing in remainder of the paper is presented in Supplementary Table \ref{tab:equations}. 

\textbf{Estimating $P(D_e|D_o,\hyp{Z^\prime}{c}),P(D_e|D_o,\hyp{Z^\prime}{\overline c}) $:}\\
Let $\vars W=\vars Z\setminus X$, and let $\theta_{Y_x|\mathbf W}$ be a set of parameters expressing the conditional probabilities for $P(Y|do(X), \vars W)$. Also, let $\theta_{Y|X, \mathbf W}$ denote the observational parameters for $P(Y|X, \vars W)$. By integrating over all $\theta_{Y_x|\mathbf W}$, we obtain 
\begin{equation}\label{eq:exp_score}
\begin{split}
    & P(D_e|D_o,\hyp{Z}{c}) =\\
    & \int_{Y_x|\mathbf W}P(D_e|\theta_{Y_x|\mathbf W})f(\theta_{Y_x|\mathbf W}|D_o, \hyp{Z}{c})d\theta_{Y_x|\mathbf W}
    \end{split}
\end{equation}
$f(\theta_{Y_x|\mathbf W}|D_o, \hyp{Z}{c})$ is the posterior for $\theta_{Y_x|\mathbf W}$ given the observational data, when \vars Z is the IMB and the CMB. In this case, $P(Y|do(X), \vars W) = P(Y|X, \vars W)$, and therefore $f(\theta_{Y_x|\mathbf W}|D_o, \hyp{Z}{c})=f(\theta_{Y|X, \mathbf W}|D_o)$. Eq. \ref{eq:exp_score} can then be rewritten in terms of the observational parameters as
\begin{equation}\label{eq:exp_score_obs}
\begin{split}
    &P(D_e|D_o,\hyp{Z}{c}) =\\
    &\int_{\theta_{Y|X, \mathbf W}}P(D_e|\theta_{Y|X,\mathbf W})f(\theta_{Y|X,\mathbf W}|D_o)d\theta_{Y|X,\mathbf W}
    \end{split}
\end{equation}
Eq. \ref{eq:exp_score_obs} is the marginal likelihood of $Y$ in experimental data, with parameter density $f(\theta_{Y|X,\mathbf W}|D_o)$ being equal to the parameter posterior given $D_o$. In other words, under $\hyp{Z}{c}$, the observational and experimental parameters coincide. Therefore, $D_o$ gives us a strong "prior" for $D_e$. Eq. \ref{eq:exp_score_obs} can be computed in closed-form for distributions with conjugate priors. 

Under \hyp{Z}{\overline c}, the equality of the observational and experimental parameters does not hold, and we cannot use the $\theta_{Y|X,\mathbf W}$ to inform $\theta_{Y_X|\mathbf W}$, at least not in a straightforward way. Instead, we impose that $f(\theta_{Y_X|\mathbf W}|D_o)= f(\theta_{Y_X|\mathbf W})$. Then  $P(D_e|D_o,\hyp{Z}{\overline c})$ corresponds to the marginal likelihood of \var Y in the experimental data, using a  prior that we model as being non-informative.

\textbf{Estimating $P(D_o|\hyp{Z}{c}),P(D_o|\hyp{Z}{\overline c}) $:}\\
These probabilities score how well the observational data fit with the hypotheses $\hyp{Z}{c}, \hyp{Z}{\overline c}$.
We can derive these terms on the basis of the OMB and its connection to the IMB and the CMBs. We first need to express the hypothesis \textbf{that a set \vars U is the OMB of \var Y}: Let  $\hyp{U}{o}$ denote this hypothesis; thus,  for any $\vars U\subseteq \vars V\cup X$,  $\hyp{\vars U}{o}$ is true iff  $\vars U$ is the OMB for \var Y. Then we can write 
\begin{equation}\label{eq:obs_score_c}
P(D_o|\hyp{Z}{C}) = \sum_{\vars U\subseteq \vars V\cup X} P(D_o|\hyp{U}{o})P(\hyp{U}{o}|\hyp{Z}{C}),
\end{equation}
for $C=c, \overline c$. Under $\hyp{Z}{c}\!$, Theorem \ref{the:imbcmbismb} implies that  $P(\hyp{U}{o}|\hyp{Z}{c})=1$  if $\vars U = \vars Z$, and zero otherwise. Under $\hyp{Z}{\overline c}$, the IMB is not  a CMB. Instead, the IMB has to be a subset of \vars U, therefore $P(\hyp{U}{o}|\hyp{Z}{C})=0$ for any $\vars Z\supset \vars U$.

$P(D_o|\hyp{U}{o})$ is the marginal likelihood of $Y$ in $D_o$, under the  hypothesis  that $\vars U$ is the data-generating OMB for \var Y in the observational data. We can obtain this likelihood using a Bayesian scoring algorithm like FGES, by scoring a DAG where \var Y is a child of variables  $\vars U$ (and no other edges are in the graph). We call this algorithm \FGESMB. For discrete variables, in the large sample limit this probability will be maximum only for the  true OMB:

\begin{theorem}\label{the:fgesmb}
Given dataset $D_o$ that contains samples from a strictly positive distribution $P$, which is a perfect map for a SMCM \graph G, the BD score~\citep{heckerman1995learning} will assign the highest score to the OMB of $Y$ in the large sample limit.
\end{theorem}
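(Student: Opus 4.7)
The plan is to exploit the standard asymptotic analysis of the BD score combined with a characterization of the OMB as the unique minimal Markov blanket of $Y$. Since the algorithm scores a DAG in which $Y$'s only parents are the candidate set $\vars U$ and no other edges are present, the log marginal likelihood decomposes across variables, and the contributions from variables other than $Y$ do not depend on $\vars U$. Thus all comparisons reduce to the marginal-likelihood contribution of $Y$ given $\vars U$.

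First I would invoke the large-sample expansion of the BD score (equivalently, the Laplace approximation to the marginal likelihood, which yields the Schwarz/BIC leading terms): up to an $O(1)$ remainder,
\begin{equation*}
\log P(D_o^Y \mid D_o^{\vars U}, \hyp{U}{o}) = -N\,\hat H_N(Y\mid\vars U) - \tfrac{d_{\vars U}}{2}\log N + O(1),
\end{equation*}
where $\hat H_N$ is the empirical conditional entropy and $d_{\vars U}$ is the number of free parameters in the conditional $P(Y\mid\vars U)$. By the law of large numbers, $\hat H_N(Y\mid\vars U) \to H(Y\mid\vars U)$ almost surely under $P$, so asymptotically the score is governed by $-N\,H(Y\mid\vars U) - \tfrac{d_{\vars U}}{2}\log N$.

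Next I would use the perfect-map and strict-positivity assumptions. Strict positivity yields the intersection property, so the Markov boundary $\mb(Y)$ is the \emph{unique} minimal Markov blanket of $Y$ in $P$. If $\vars U$ is not a Markov blanket, then there exists $\vars W \subseteq \vars V\setminus \vars U$ with $Y\nCI \vars W \mid \vars U$ in $P$; by the perfect-map assumption this dependence is nonzero, so $H(Y\mid\vars U) > H(Y\mid\mb(Y))$ strictly. If $\vars U$ is a Markov blanket, then $P(Y\mid\vars U) = P(Y\mid\vars V)$ pointwise and $H(Y\mid\vars U) = H(Y\mid\mb(Y))$.

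I would then finish by comparing scores. For any $\vars U$ that is not a Markov blanket, the $\Theta(N)$ entropy gap eventually dominates the $O(\log N)$ complexity term, so $\mb(Y)$ beats it for all sufficiently large $N$. For any Markov blanket $\vars U \supsetneq \mb(Y)$, the entropy terms tie, but $d_{\vars U} > d_{\mb(Y)}$ (strict positivity guarantees that every extra parent contributes at least a factor of two to the number of parameter cells), so the complexity penalty decides in favor of $\mb(Y)$. Hence $\mb(Y)$ is the unique maximizer in the large-sample limit. The main obstacle is making the asymptotic expansion precise and uniform: one must ensure the $O(1)$ remainder is bounded uniformly across the finite collection of candidate $\vars U$, and one must handle tie-breaking between nested Markov blankets via the complexity term in a way consistent with the BD prior (and not merely the BIC surrogate), which is where the strict-positivity assumption plays its crucial role.
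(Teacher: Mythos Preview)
Your proposal is correct and follows essentially the same route as the paper: derive the large-sample expansion of the BD score as $-N\,H(Y\mid\vars U)-\tfrac{d_{\vars U}}{2}\log N + O(1)$, then run the three-way case split (not a Markov blanket, Markov blanket but not the OMB, the OMB itself) using the entropy gap for the first case and the parameter-count penalty for the second. The only cosmetic difference is that the paper obtains the expansion by an explicit Stirling computation on the Gamma terms in the BD formula (stated as a separate lemma) rather than by appealing to the Laplace/BIC approximation, and it packages the entropy facts you use as two short preliminary lemmas; your remark about controlling the $O(1)$ remainder uniformly over the finite family of candidate sets is a point the paper leaves implicit.
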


Eq. \ref{eq:obs_score_c} needs to be computed for all subsets of the covariate sets. In practice, however, for large $N_o$, these probabilities are dominated by the true OMB. Assuming our sample is large enough, we can use an algorithm with asymptotic guarantees for identifying the true OMB. In fact, once we commit to the an observational Markov boundary $\vars U^\star$,  Eq. \ref{eq:obs_score_c} leads to the following equations
\begin{equation}\label{eq:obs_score_c_fin}
\begin{split}
& P(D_o|\hyp{Z}{c}) \Equal P(D_o|\hyp{U^\star}{o})\textnormal{ for } \vars Z\Equal\vars U^\star\\
& P(D_o|\hyp{Z}{\overline c}) = P(D_o|\hyp{U^\star}{o})\textnormal{ for all } \vars Z\subseteq \vars U^\star\\
\end{split}
\end{equation}
For the remaining cases ($\hyp{Z}{c}$ and $\vars Z\Equal\vars U^\star$, or $\hyp{Z}{c}$ and $\vars Z\supset\vars U^\star$), the corresponding probabilities are zero.\\

\textbf{Bayesian estimation of $P(Y|do(X), \vars V, D_e, D_o)$:}\\
We now compute the $P(Y|do(X), \vars V)$ using Bayesian model averaging over the hypotheses $\hyp{Z}{C}$.  Let  $x,y,\vars V=\vars v$, denote given instances of $X,Y$ and $\vars V$, respectively.  When $\vars V=\vars v$, we use $\vars W=\vars w_v$ to denote the corresponding values of a set $\vars W\subset \vars V$. Recall that  under $\hyp{Z}{c}$ and under , $\hyp{Z}{\overline c}$,  $P(Y|do(X), \vars V) = P(Y|X, \vars W)$, where $\vars W=\vars Z\setminus X$.Then for a given instance of $\vars V=\vars v$, we have 
\begin{equation}\label{eq:prob_est}
\begin{split}
& P(y|do(x), \vars v, D_e, D_o) =\\ 
    & \sum_{\vars Z\subset \vars V}\sum_{C=c, \overline c}
    P(y|do(x), \vars w_v, D_e, D_o,\hyp{Z}{C})P(\hyp{Z}{C}|D_o, D_e)
\end{split}
\end{equation}
This equation computes the \emph{expectation} of the conditional probability parameter. The individual probabilities $P(Y|do(X), \vars W, D_e, D_o,\hyp{Z}{C})$ can be estimated as posterior expectations of $P(Y|do(X), \vars W)$ from the data. Specifically, under given \hyp{Z}{c}, $P(Y|do(X), \vars W) = P(Y|X, \vars W)$, and therefore we can use both $D_e$ and $D_o$ for the posterior expectation. In contrast, under \hyp{Z}{\overline c}, we only use $D_e$. Analytical equations for these probabilities for multinomial distributions can be found in Supplementary Table \ref{tab:equations}.
\begin{algorithm}[t]
\LinesNumbered
\SetKwInOut{Input}{input}
\SetKwInOut{Output}{output}
\SetKwFunction{FGESMB}{FGESMB}
\SetKwFunction{FIMB}{FindIMB}
\SetKwFunction{FCItiers}{FCIt-IMB}
\SetKwFunction{MarkovBoundary}{MarkovBoundary}
\Input{$D_o, D_e$, treatment \var X, outcome \var Y, pre-treatment covariates \vars V}
\Output{Post-intervention distribution $P(Y|do(X),\vars V)$}
$\mb(Y)\leftarrow\MarkovBoundary(Y, D_o)$\;
\ForEach{subset \vars Z of $\mb(Y)$ and $C=c, \overline c$}
{Compute $P(\hyp{\vars Z}{C}|D_e, D_o)$ using Eq. \ref{eq:score}\;
Compute $P(Y|do(X),\vars V, D_e, D_o,\hyp{\vars Z}{C})$ using  Eq. \ref{eq:prob_est}\;}
$P(Y|do(X),\vars V)\leftarrow \sum_\vars Z\sum_{C=c,\overline c}P(Y|do(X),\vars V, D_e, D_o, \hyp{\vars Z}{C})P(\hyp{\vars Z}{C}|D_e, D_o)$\;
\caption{FindIMB\label{algo:findIMB}}
\end{algorithm}

\begin{figure*}[t!]
\begin{tabular}{ccc}
\includegraphics[width =0.3\textwidth]{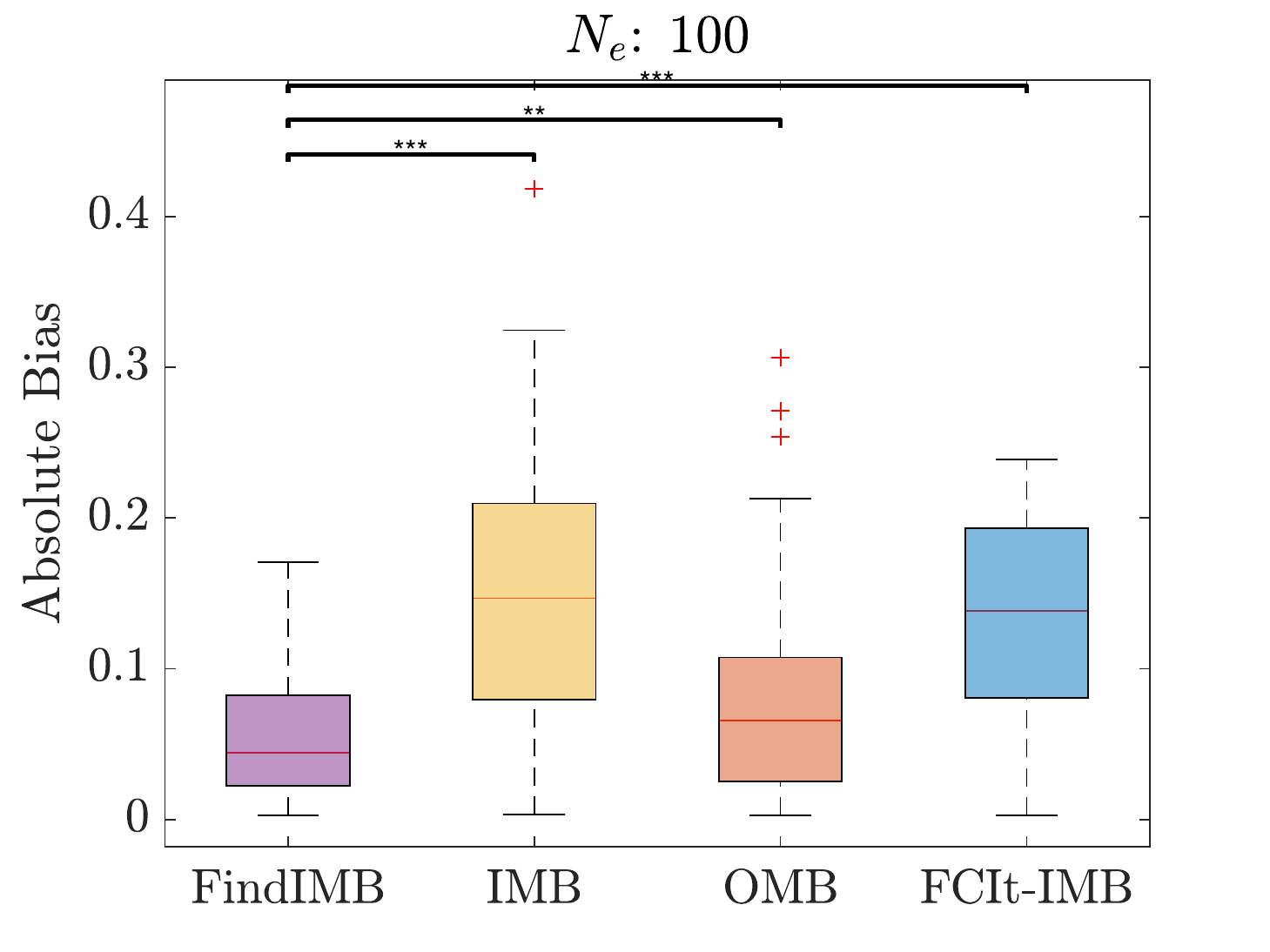}&
\includegraphics[width =0.3\textwidth]{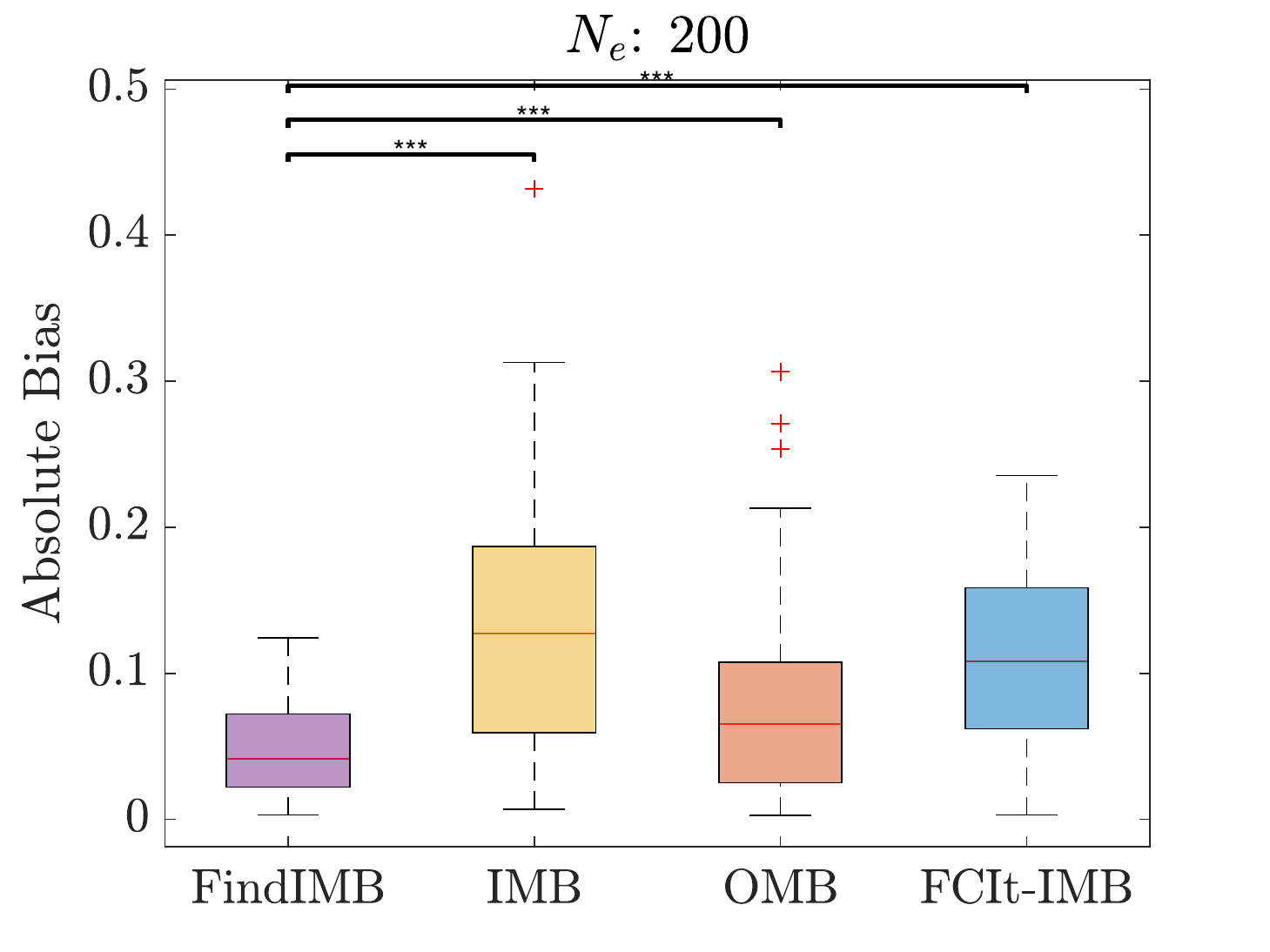}&
\includegraphics[width =0.3\textwidth]{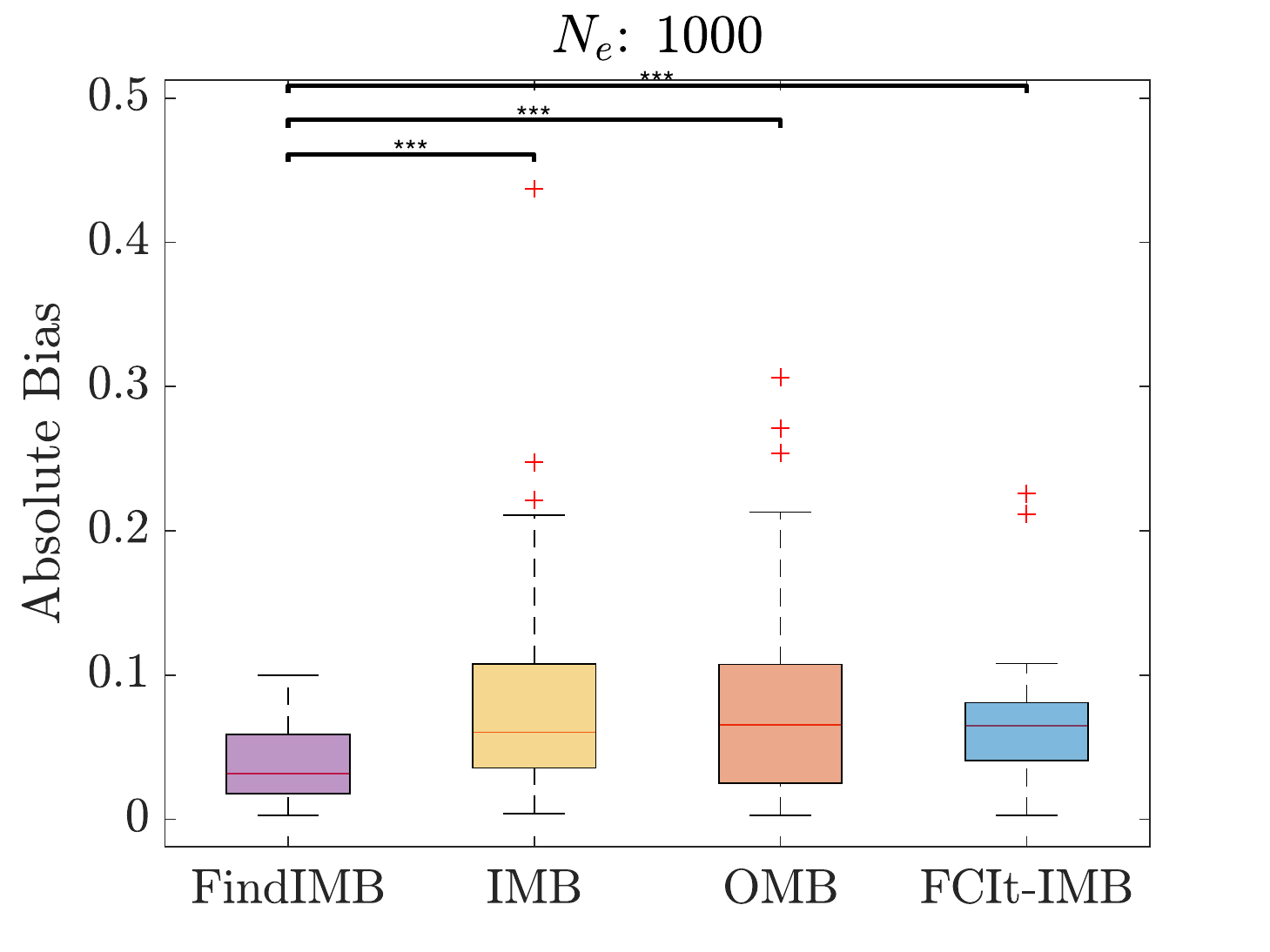}
\end{tabular}
\caption{\label{fig:results_1} Boxplots of absolute bias in the estimation of $P(Y|do(X),\vars V)$  using (a) \protect\FIMB (b) IMB (c) OMB and (d) \protect\FCItiers. Data were simulated from random DAGs with 10 observed and 5 latent variables. $D_o$ included 10,000 samples, and $D_e$ included 100 (left), 200 (middle) or 1000  (right) samples. \protect\FIMB improves the estimation of $P(Y|do(X), \vars V)$ particularly for smaller experimental sample sizes. Black asterisks denote statistical significance, assessed with the Wilcoxon signed-rank test. Three stars correspond to $p<0.001$.} 
\end{figure*}
\section{Related work}
We are not aware of other methods that try to identify causal and interventional Markov boundaries. Our work has connections and builds on work from many different areas.  Due to space constraints, we only focus on methods that do not require causal sufficiency.
\textbf{Markov boundaries:} Several algorithms learn OMBs from data under causal insufficiency \citep{yu2018, yu2020}. In addition, \FGESMB presented in Sec. \ref{sec:fimb} is also a sound and complete method for learning OMBs from data. These methods can be used to identify the IMBs from the experimental data, but they do not combine observational and experimental data to learn IMBs. 
\textbf{Identifiability:} \cite{shpitser2006a,shpitser2006identification} and \cite{tian2003identification} provide sound and complete identifiability results for post-intervention distributions from observational data when the causal graph is known. These methods can answer queries for a specific marginal or conditional probability of interest. \cite{hyttinen2015calculus} and \cite{jaber2019causal} provide similar identifiability results when the graph is unknown, using the Markov equivalence class of graphs that are consistent with the observational data.  \cite{hyttinen2015calculus} can provide identifiability results for graphs that are consistent with conditional independencies in both $D_e$ and $D_o$. However, the method is not proven to be complete for these settings. These methods are not directly comparable with  our method because they do not select features for optimal prediction. Moreover, they provide expressions for the post-intervention distributions that are based on observational data alone, not by combining $D_o$ and $D_e$  like \FIMB. \textbf{Combining observational and experimental data to learn causal graphs:} Several causal discovery methods combine observational and experimental data to learn causal structure \citep{triantafillou2015, hyttinen2014constraint, mooij2019, andrews2020}. These methods return a summarized version of all the causal graphs that are consistent with all the independence constraints in all the data sets, observational and experimental.  While these methods can be used to improve the estimation of IMBs, it is not clear that they can always provide a unique solution in this setting. Two additional drawbacks they have for the purpose of optimized target prediction are that (a) they rely on conditional independence tests that are unreliable when $N_e$ is low, and (b) they learn the entire graph and do not focus on finding the neighborhood of the target variable. This can result in unreliable orientations due to error propagation. The method that has the closest setting to ours is by \cite{andrews2020} (\cite{mooij2019} is also related, but more general, and the two are equivalent for our setting). The authors propose a method called FCItiers that can learn  a family of SMCMs from $D_e$ and $D_o$ when (a) the target of the intervention is known and (b) we specify "tiered knowledge" on the variables (e.g., we know which variables are pre-treatment).  The method is complete in these settings. In the experimental section, we develop a baseline comparison method based on FCItiers.
\textbf{Selecting optimal adjustment sets.} Some methods seek to select optimal adjustment sets for efficient average treatment effect estimation \citep{perkovic2017complete,rotnitzky2019efficient, rotnitzky2020}. While these methods have a  different purpose than ours, they have some connections with our work since, for pre-treatment variables, any CMB is also an adjustment set. We point out that while  optimal adjustment sets and CMBs may often coincide (for example in DAGs), they are not always the same (see example Fig \ref{sup:fig:adjustment} in the Supplementary). Moreover, these methods are not directly comparable to ours since they focus on identifying average treatment effects.
\textbf{Potential Outcomes Approaches:}  \cite{kallus2018removing} present a method for estimating conditional average treatment effects (CATEs) by combining $D_o$ and $D_e$. The method assumes a binary treatment and uses the experimental data to model the effect of possibly unmeasured confounders as a function of the measured covariates. The CATE is obtained from the $D_o$ by adding the modeled correction. The main assumption of the method is that the hidden confounding has an identifiable parametric structure. The method is implemented for continuous covariates and outcome and a linear correction function, obtained by solving a least squares optimization problem. It is not directly applicable to our settings of categorical covariates, and extending the optimization problem in these settings is not straightforward.
\textbf{Transportability:} Finally, our work has some connections with the field of  transportability  \cite{bareinboim2013general}, where knowledge of the causal graph is used to determine if the results of an experimental trial apply to a different population. However, the methods require knowing the causal graph, and focus on transferring estimators across distributions rather than combining data to improve estimators. 

\begin{figure*}[t!]
\begin{tabular}{ccc}
\includegraphics[width =0.3\textwidth]{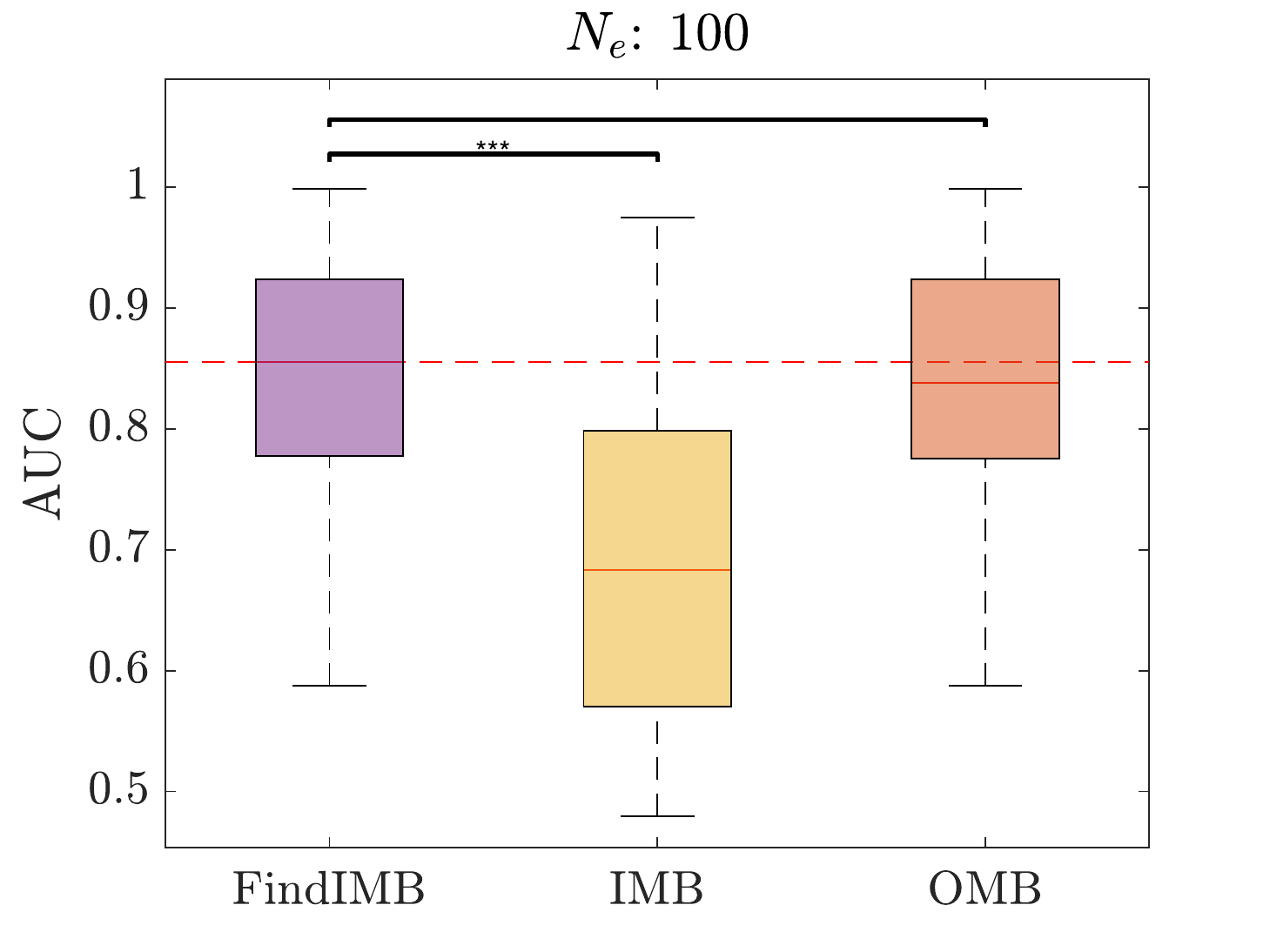}&
\includegraphics[width =0.3\textwidth]{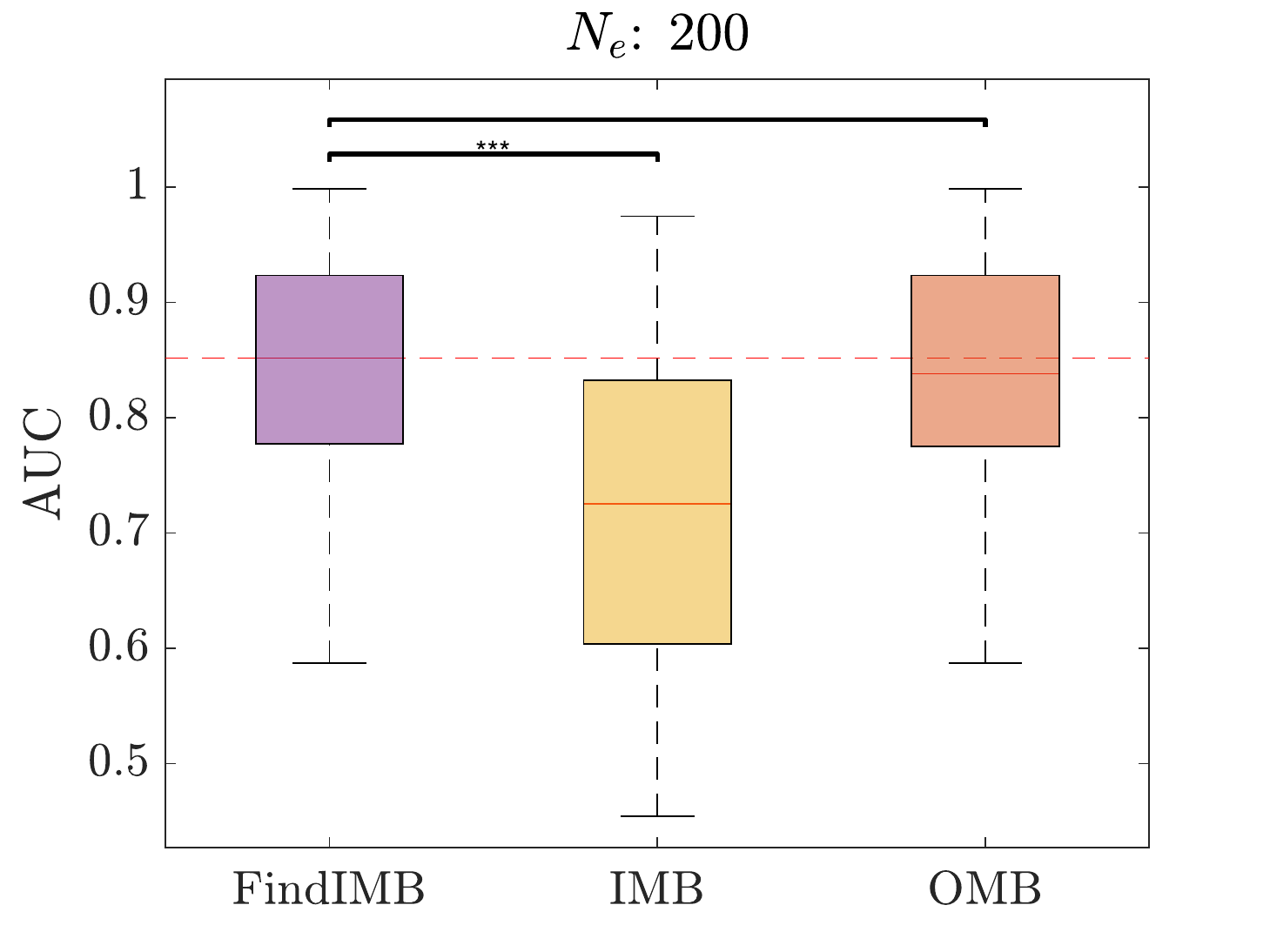}&
\includegraphics[width =0.3\textwidth]{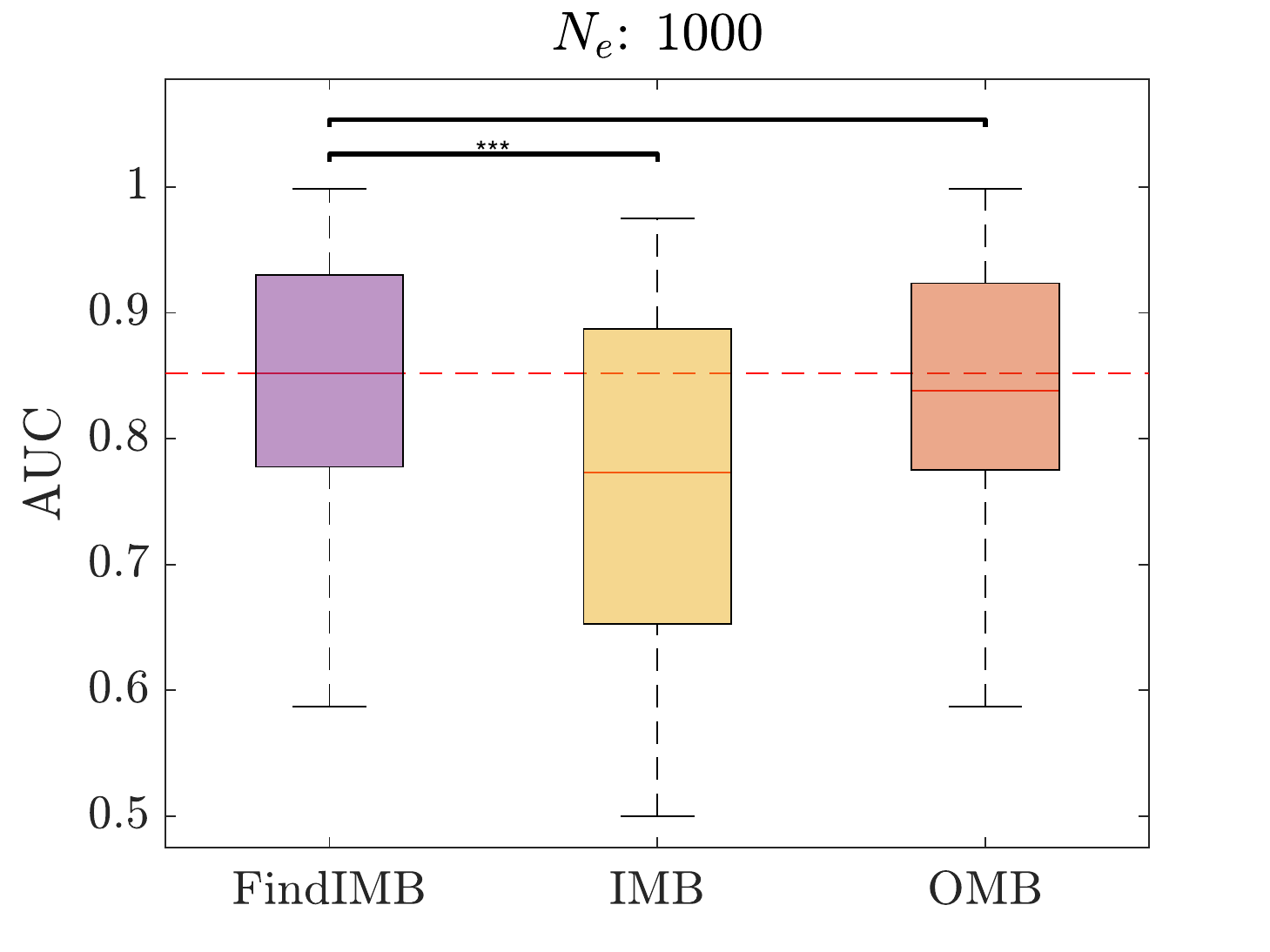}\\
\end{tabular}
\caption{ \label{fig:results_scaling}Boxplots of areas under the ROC curve for predicting $Y_x$ using (a) \protect\FIMB (b) IMB and (c) OMB. Data were simulated from random DAGs with 40 observed and 20 latent variables. $D_o$ included 10,000 samples, and $D_e$ included 100 (left), 200 (middle) or 1000  (right) samples. Black asterisks denote statistical significance, assessed with the Wilcoxon signed-rank test. Three stars correspond to $p<0.001$, no stars denote non-significance.} 
\end{figure*}

\begin{figure}[t!]
\centering
\includegraphics[width =0.8\columnwidth]{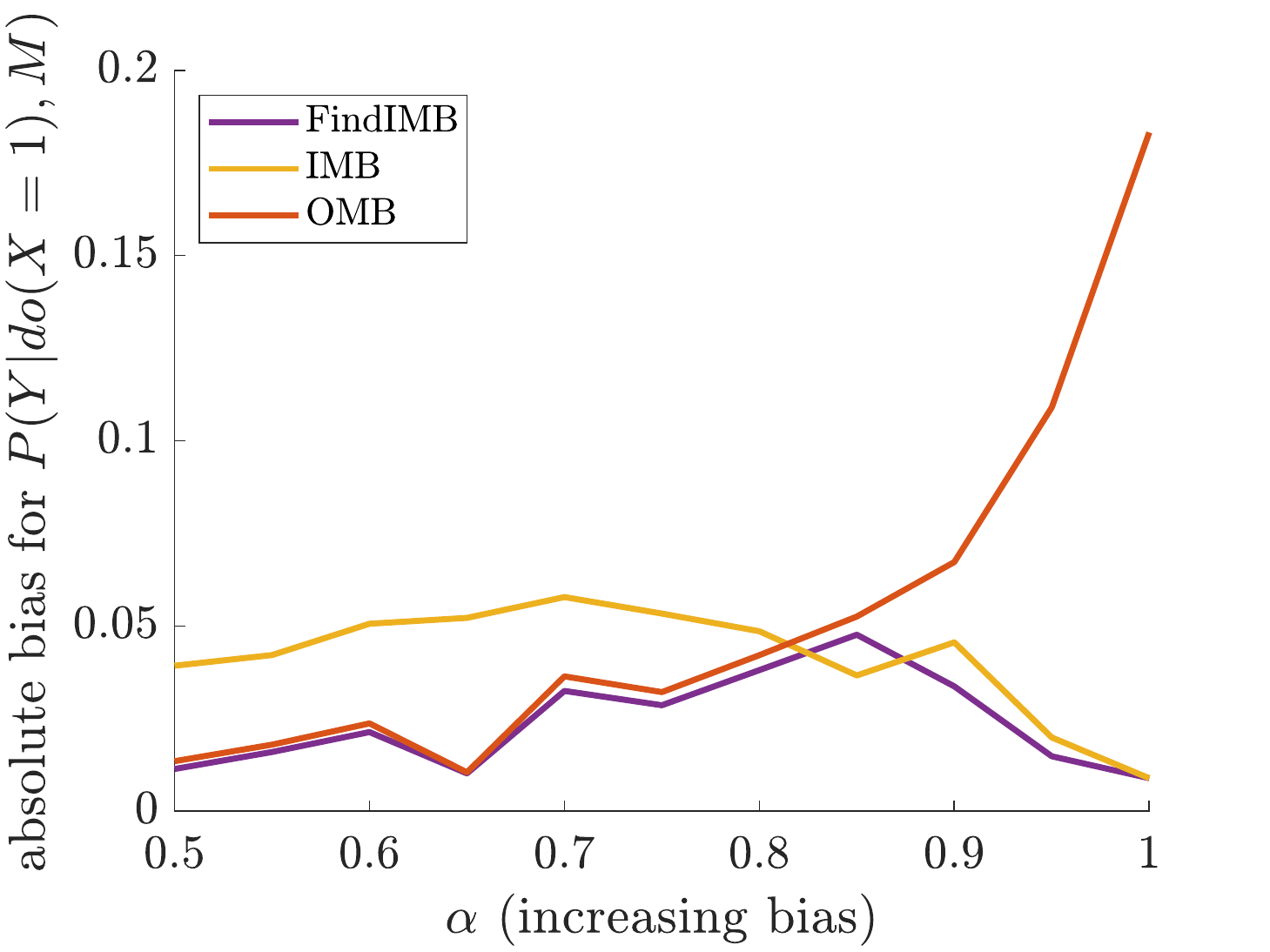}
\caption{\label{fig:results_bias_alpha} Performance of \protect\FIMB, IMB and OMB for estimating $P(Y|do(X\Equal 1), M)$ with increasing m-bias.} 
\end{figure}
\section{Experiments} \label{sec:experiments}
In this section, we show the performance of \FIMB  using simulated data.  We simulated random DAGs with a varying number of discrete variables, with mean in-degree 2. Each DAG includes a binary treatment \var X and outcome \var Y, where $X\rightarrow Y$. The remaining covariates \vars V are pre-treatment and are binary or ternary, and $1/3$ of the variables are set to be latent. The observational data $D_o$ consist of 10,000 simulated samples from the ground truth DAGs, and do not include values for the latent variables. 
\textbf{Comparison to other approaches.}  We compared \FIMB to the following approaches: 
(a) \textbf{IMB}: using only experimental data. We used $D_e$ to identify the \imby using  \FGESMB. After identifying  $\imby$, and we used the posterior expectation $P(Y|do(X), \imby\setminus X, D_e)$ as the estimator for $P(Y|do(X), \vars V)$. (b) \textbf{OMB}: using only observational data. We used $\FGESMB(D_o)$ to identify the OMB of $Y$, $MB(Y)$, and used  the posterior expectation $P(Y|do(X), MB(Y)\setminus X, D_o)$ estimated on $D_o$ as the estimator of $P(Y|do(X), \vars V)$. This estimator is unbiased when conditional ignorability holds for the OMB of $Y$. (c) \textbf{FCIt-IMB:} Using both observational and experimental data based on FCItiers: We use FCItiers using as input a data set $D$ constructed by concatenating $D_e$ and $D_o$, and adding a binary variable $I_e\rightarrow X$ that corresponds to the presence or absence of manipulation of $X$. So, $I_e = 1$ for samples in $D_e$ and $I_e=0$ otherwise.  \FCItiers outputs a PAG $\graph P$ representing all possible underlying SMCMs. Let $\graph P_{\overline X}$ denote the corresponding manipulated PAG. We then take the Markov boundary of $Y$ in $P_{\overline X}$ to be the $IMB_X(Y)$. 
After identifying  $IMB_X(Y)$, we test if it is a backdoor set in $\graph P_{\overline X}$. If so, we used both $D_e$ and $D_o$ pooled together to estimate $P(Y|do(X),IMB_X(Y)\setminus X)$. Otherwise, we only used $D_e$. 

First, we tested if our \FIMB method improves estimation of the probability $P(Y|do(X), \vars V)$. We simulated DAGs with 10 observed and 5 latent variables, and applied the methods described above. Each method outputs a set of variables $\vars Z$, that is used as an estimate for $P(Y|do(X), \vars Z, \vars V\setminus Z).$ Notice that even for 10 variables, the number of possible configurations of $\vars Z$ can be very large, and some of these configurations may be very rare. To avoid computing these parameters for all possible configurations, we tested the methods in a test dataset $D_e^{test}$, that includes  1000 treatment and 1000 control cases simulated from the manipulated ground truth graph. For each sample in $D_e^{test}$, we obtained an estimate $\hat P(Y|do(X), \vars V)$ with the four methods. The ground truth probability was estimated from the original manipulated Bayesian network with the junction tree algorithm. We then computed the average absolute bias $|\hat P(Y|do(X), \vars V)- P(Y|do(X), \vars V)|$ over all test samples. Fig. \ref{fig:results_1} shows that \FIMB  produced the most accurate probabilities, compared to using only observational or only experimental data. Moreover, \FIMB outperforms \FCItiers ($p<10^{-3}$ in all cases for a left-tailed t-test). One reason is that \FCItiers selects much larger IMBs than \FIMB, possibly due to error propagation that results in many bidirected edges. Thus, the resulting parameters are estimated based on much fewer samples. 

We also tested the scalability of \FIMB, using DAGs with 40 observed and 20 latent variables. For this experiment, we could not test against \FCItiers because the method results in very large IMBs. 
For the same reason, we could not estimate the true parameters $P(Y|do(X), \vars V)$ for computational reasons, since the ground truth IMBs can also be very large and include rare configurations. For this reason, instead of measuring the average absolute bias, we measured the performance of the methods to classify the test samples correctly. Fig. \ref{fig:results_scaling} shows the area under the ROC curve (AUC) of the models based \FIMB, IMB, and OMB. \FIMB performs on par or better than the two alternatives. Average running time for \FIMB (learning the model) was $1.71 \pm 2.46$ seconds per iteration.

One interesting finding is that in all experiments, using the observational data only, performs better than using experimental data and often is close to the performance when combining $D_e$ and $D_o$ using \FIMB. This happens because in random graphs, the effect of variables inducing bias is often negligible \citep{greenland2003quantifying}, and proxies of the unmeasured confounders are often included in the observed covariates. However, there are cases where the conditioning on an observed variable in observational data can produce heavily biased post-intervention probability estimates.  A very simple example is the graph in supplementary Fig. \ref{sup:fig:mbias}), which we call the ''m-bias" graph. To illustrate how m-bias can affect the prediction of $Y_x$ from observational data, we simulated data from the m-bias graph with binary variables. We set $Y, X $ and $M$ to be noisy-AND functions of their parents with a parameter $\alpha$. $\alpha$ has a monotonic relationship with the bias in estimating $P(Y|do(X=1), M)$ using observational data: Larger  $\alpha$ leads to a larger bias. We then varied alpha from $0.5$ to $1$, and we simulated $D_o$ and $D_e$ with 10,000 and 1000 samples, respectively. We used \FIMB, IMB and OMB to estimate $P(Y|do(X=1), M)$ in a test data set. Fig. \ref{fig:results_bias_alpha} shows the bias in the estimated parameter. We can see that while using $D_o$ to estimate $P(Y|do(X=1), M)$ leads to increasing bias, combining $D_e$ and $D_o$ can identify the situations where the parameter is not identifiable from observational data. We believe that noisy-AND types of distributions are not rare in biomedical data. 
\section{Discussion}
Our paper extends the concepts of Markov boundaries for predicting post-intervention distributions, and presents a method for learning such Markov boundaries from mixtures of observational and experimental data. The
method could be useful in settings like embedded trials, where we have abundant observational and limited experimental data. Future work includes extensions of the method to mixed data, overlapping covariate sets in observational and experimental data, and to non-singleton treatments and outcomes.
\bibliography{biblio.bib} 

\begin{thebibliography}{30}
\providecommand{\natexlab}[1]{#1}
\providecommand{\url}[1]{\texttt{#1}}
\expandafter\ifx\csname urlstyle\endcsname\relax
  \providecommand{\doi}[1]{doi: #1}\else
  \providecommand{\doi}{doi: \begingroup \urlstyle{rm}\Url}\fi

\bibitem[Andrews et~al.(2020)Andrews, Spirtes, and Cooper]{andrews2020}
Bryan Andrews, Peter Spirtes, and Gregory~F Cooper.
\newblock On the completeness of causal discovery in the presence of latent
  confounding with tiered background knowledge.
\newblock In \emph{International Conference on Artificial Intelligence and
  Statistics (AISTATS)}, pages 4002--4011. PMLR, 2020.

\bibitem[Angus(2015)]{angus2015fusing}
Derek~C Angus.
\newblock Fusing randomized trials with big data: The key to self-learning
  health care systems?
\newblock \emph{Journal of American Medical Association (JAMA)}, 314\penalty0
  (8):\penalty0 767--768, 2015.

\bibitem[Angus et~al.(2020)Angus, Berry, Lewis, Al-Beidh, Arabi, van
  Bentum-Puijk, Bhimani, Bonten, Broglio, Brunkhorst, et~al.]{angus2020remap}
Derek~C Angus, Scott Berry, Roger~J Lewis, Farah Al-Beidh, Yaseen Arabi, Wilma
  van Bentum-Puijk, Zahra Bhimani, Marc Bonten, Kristine Broglio, Frank
  Brunkhorst, et~al.
\newblock The {REMAP-CAP} (randomized embedded multifactorial adaptive platform
  for community-acquired pneumonia) study rationale and design.
\newblock \emph{Annals of the American Thoracic Society}, 17\penalty0
  (7):\penalty0 879--891, 2020.

\bibitem[Bareinboim and Pearl(2013)]{bareinboim2013general}
Elias Bareinboim and Judea Pearl.
\newblock A general algorithm for deciding transportability of experimental
  results.
\newblock \emph{Journal of Causal Inference}, 1\penalty0 (1):\penalty0
  107--134, 2013.

\bibitem[Cover(1999)]{cover1999elements}
Thomas~M Cover.
\newblock \emph{Elements of Information Theory}.
\newblock John Wiley \& Sons, 1999.

\bibitem[Greenland(2003)]{greenland2003quantifying}
Sander Greenland.
\newblock Quantifying biases in causal models: Classical confounding vs
  collider-stratification bias.
\newblock \emph{Epidemiology}, 14\penalty0 (3):\penalty0 300--306, 2003.

\bibitem[Heckerman et~al.(1995)Heckerman, Geiger, and
  Chickering]{heckerman1995learning}
David Heckerman, Dan Geiger, and David~M Chickering.
\newblock Learning {B}ayesian networks: {T}he combination of knowledge and
  statistical data.
\newblock \emph{Machine Learning}, 20\penalty0 (3):\penalty0 197--243, 1995.

\bibitem[Henckel et~al.(2019)Henckel, Perković, and
  Maathuis]{henckel2019graphical}
Leonard Henckel, Emilija Perković, and Marloes~H. Maathuis.
\newblock Graphical criteria for efficient total effect estimation via
  adjustment in causal linear models.
\newblock \emph{arXiv preprint arXiv:1907.02435}, 2019.

\bibitem[Hyttinen et~al.(2014)Hyttinen, Eberhardt, and
  J{\"a}rvisalo]{hyttinen2014constraint}
Antti Hyttinen, Frederick Eberhardt, and Matti J{\"a}rvisalo.
\newblock Constraint-based causal discovery: Conflict resolution with answer
  set programming.
\newblock In \emph{Proceedings of the 30th Conference on Uncertainty in
  Artificial Intelligence (UAI)}, pages 340--349, 2014.

\bibitem[Hyttinen et~al.(2015)Hyttinen, Eberhardt, and
  J{\"a}rvisalo]{hyttinen2015calculus}
Antti Hyttinen, Frederick Eberhardt, and Matti J{\"a}rvisalo.
\newblock Do-calculus when the true graph is unknown.
\newblock In \emph{Proceedings of the 31st Conference on Uncertainty in
  Artificial Intelligence (UAI)}, pages 395--404, 2015.

\bibitem[Jaber et~al.(2019)Jaber, Zhang, and Bareinboim]{jaber2019causal}
Amin Jaber, Jiji Zhang, and Elias Bareinboim.
\newblock Causal identification under {M}arkov equivalence: Completeness
  results.
\newblock In \emph{Proceedings of the 36th International Conference on Machine
  Learning (ICML)}, pages 2981--2989, 2019.

\bibitem[Kallus et~al.(2018)Kallus, Puli, and Shalit]{kallus2018removing}
Nathan Kallus, Aahlad~Manas Puli, and Uri Shalit.
\newblock Removing hidden confounding by experimental grounding.
\newblock In \emph{Advances in Neural Information Processing Systems
  (NeurIPS)}, pages 10888--10897, 2018.

\bibitem[Mooij et~al.(2020)Mooij, Magliacane, and Claassen]{mooij2019}
JM~Mooij, S~Magliacane, and T~Claassen.
\newblock Joint causal inference from multiple contexts.
\newblock \emph{Journal of Machine Learning Research}, 21\penalty0
  (99):\penalty0 1--108, 2020.

\bibitem[Pearl(2000)]{Pearl2000}
J~Pearl.
\newblock \emph{Causality: Models, Reasoning and Inference}, volume 113 of
  \emph{Hardcover}.
\newblock Cambridge University Press, 2000.

\bibitem[Pearl(1988)]{Pearl1988}
Judea Pearl.
\newblock \emph{Probabilistic Reasoning in Intelligent Systems: Networks of
  Plausible Inference}.
\newblock Morgan Kaufmann Publishers Inc., 1988.

\bibitem[Pellet and Elisseeff(2008)]{Pellet}
Jean-Philippe Pellet and Andr{\'e} Elisseeff.
\newblock Finding latent causes in causal networks: An efficient approach based
  on {M}arkov blankets.
\newblock In \emph{Advances in Neural Information Processing Systems
  (NeurIPS)}, pages 1249--1256, 2008.

\bibitem[Perkovic et~al.(2017)Perkovic, Textor, Kalisch, and
  Maathuis]{perkovic2017complete}
Emilija Perkovic, Johannes Textor, Markus Kalisch, and Marloes~H Maathuis.
\newblock Complete graphical characterization and construction of adjustment
  sets in {M}arkov equivalence classes of ancestral graphs.
\newblock \emph{Journal of Machine Learning Research}, 18\penalty0
  (1):\penalty0 8132--8193, 2017.

\bibitem[Richardson(2003)]{richardson2003}
Thomas Richardson.
\newblock Markov properties for acyclic directed mixed graphs.
\newblock \emph{Scandinavian Journal of Statistics}, 30\penalty0 (1):\penalty0
  145--157, 2003.
\newblock ISSN 03036898.

\bibitem[Richardson et~al.(2002)Richardson, Spirtes,
  et~al.]{richardson2002ancestral}
Thomas Richardson, Peter Spirtes, et~al.
\newblock Ancestral graph {M}arkov models.
\newblock \emph{The Annals of Statistics}, 30\penalty0 (4):\penalty0 962--1030,
  2002.

\bibitem[Rotnitzky and Smucler(2019)]{rotnitzky2019efficient}
Andrea Rotnitzky and Ezequiel Smucler.
\newblock Efficient adjustment sets for population average treatment effect
  estimation in non-parametric causal graphical models, 2019.

\bibitem[Rotnitzky and Smucler(2020)]{rotnitzky2020}
Andrea Rotnitzky and Ezequiel Smucler.
\newblock Efficient adjustment sets for population average causal treatment
  effect estimation in graphical models.
\newblock \emph{Journal of Machine Learning Research}, 21\penalty0
  (188):\penalty0 1--86, 2020.
\newblock URL \url{http://jmlr.org/papers/v21/19-1026.html}.

\bibitem[Shpitser and Pearl(2006{\natexlab{a}})]{shpitser2006a}
Ilya Shpitser and Judea Pearl.
\newblock Identification of joint interventional distributions in recursive
  semi-{M}arkovian causal models.
\newblock In \emph{In proceedings of the 21st National Conference on Artificial
  Intelligence}, pages 1219--1226, 2006{\natexlab{a}}.

\bibitem[Shpitser and Pearl(2006{\natexlab{b}})]{shpitser2006identification}
Ilya Shpitser and Judea Pearl.
\newblock Identification of conditional interventional distributions.
\newblock In \emph{Proceedings of the 22nd Conference on Uncertainty in
  Artificial Intelligence (UAI)}, pages 437--444, 2006{\natexlab{b}}.

\bibitem[Shpitser et~al.(2012)Shpitser, VanderWeele, and
  Robins]{shpitser2012validity}
Ilya Shpitser, Tyler VanderWeele, and James~M Robins.
\newblock On the validity of covariate adjustment for estimating causal
  effects.
\newblock \emph{arXiv preprint arXiv:1203.3515}, 2012.

\bibitem[Tian and Shpitser(2003)]{tian2003identification}
Jin Tian and Ilya Shpitser.
\newblock On the identification of causal effects.
\newblock Technical report, Cognitive Systems Laboratory, University of
  California at Los Angeles, 2003.

\bibitem[Triantafillou and Tsamardinos(2015)]{triantafillou2015}
Sofia Triantafillou and Ioannis Tsamardinos.
\newblock Constraint-based causal discovery from multiple interventions over
  overlapping variable sets.
\newblock \emph{Journal of Machine Learning Research}, 16\penalty0
  (66):\penalty0 2147--2205, 2015.

\bibitem[Tsamardinos and Aliferis(2003)]{tsamardinos2003towards}
Ioannis Tsamardinos and Constantin~F Aliferis.
\newblock Towards principled feature selection: Relevancy, filters and
  wrappers.
\newblock In \emph{International Conference on Artificial Intelligence and
  Statistics (AISTATS)}. Citeseer, 2003.

\bibitem[Van~der Zander et~al.(2014)Van~der Zander, Liskiewicz, and
  Textor]{van2014constructing}
Benito Van~der Zander, Maciej Liskiewicz, and Johannes Textor.
\newblock Constructing separators and adjustment sets in ancestral graphs.
\newblock In \emph{Proceedings of the 30th Conference on Uncertainty in
  Artificial Intelligence (UAI)}, pages 11--24, 2014.

\bibitem[{Yu} et~al.(2018){Yu}, {Liu}, {Li}, and {Chen}]{yu2018}
K.~{Yu}, L.~{Liu}, J.~{Li}, and H.~{Chen}.
\newblock Mining {M}arkov blankets without causal sufficiency.
\newblock \emph{IEEE Transactions on Neural Networks and Learning Systems},
  29\penalty0 (12):\penalty0 6333--6347, 2018.
\newblock \doi{10.1109/TNNLS.2018.2828982}.

\bibitem[{Yu} et~al.(2020){Yu}, {Liu}, and {Li}]{yu2020}
K.~{Yu}, L.~{Liu}, and J.~{Li}.
\newblock Learning {M}arkov blankets from multiple interventional data sets.
\newblock \emph{IEEE Transactions on Neural Networks and Learning Systems},
  31\penalty0 (6):\penalty0 2005--2019, 2020.
\newblock \doi{10.1109/TNNLS.2019.2927636}.

\end{thebibliography}

\onecolumn

\begin{center}
\textbf{\Large  Supplementary Materials for: Causal Markov Boundaries}
\end{center}
\setcounter{equation}{0}
\setcounter{figure}{0}
\setcounter{table}{0}
\setcounter{page}{1}
\makeatletter
\renewcommand{\theequation}{S\arabic{equation}}
\renewcommand{\thefigure}{S\arabic{figure}}
\renewcommand{\thefigure}{S\arabic{table}}

\renewcommand{\bibnumfmt}[1]{[S#1]}
\renewcommand{\citenumfont}[1]{S#1}

\begin{table}[h!]
    \begin{tabular}{c|c} 
    Eq. number &Analytical Expression\\\hline
    Eq. \ref{eq:exp_score_obs}    &\(\displaystyle P(D_e|D_o,\hyp{Z}{c}) =\prod_{j=1}^q \frac{\Gamma(\alpha_j+N_j^o)}{\Gamma(\alpha_j+N_j^o+N_j^e)} \prod_{k=1}^r\frac{\Gamma(\alpha_{jk}+N_{jk}^o+N_{jk}^e)}{\Gamma(\alpha_{jk}+N_{jk}^o)}\)\\ \hline
       -  & \(\displaystyle P(D_e|D_o,\hyp{Z}{\overline c}) =\prod_{j=1}^q \frac{\Gamma(\alpha_j)}{\Gamma(\alpha_j+N_j^e)} \prod_{k=1}^r\frac{\Gamma(\alpha_{jk}+N_{jk}^e)}{\Gamma(\alpha_{jk})}\)\\ \hline
   Eq. \ref{eq:obs_score_c_fin} &\(\displaystyle P(D_o|\hyp{Z}{c}) = \prod_{j=1}^{\tilde q} \frac{\Gamma(\tilde \alpha_j)}{\Gamma(\tilde \alpha_j+\tilde N_j^o)} \prod_{k=1}^r\frac{\Gamma(\tilde \alpha_{jk}+\tilde N_{jk}^o)}{\Gamma (\tilde \alpha_{jk})}\)\\\hline
   Eq. \ref{eq:obs_score_c_fin}&\(\displaystyle P(D_o|\hyp{Z}{\overline c}) = \prod_{j=1}^{\tilde q} \frac{\Gamma(\tilde \alpha_j)}{\Gamma(\tilde \alpha_j+\tilde N_j^o)} \prod_{k=1}^r\frac{\Gamma(\tilde \alpha_{jk}+\tilde N_{jk}^o)}{\Gamma (\tilde \alpha_{jk})}  \)\\\hline
   Terms in Eq. \ref{eq:prob_est}&\(\displaystyle P(Y=k|x, Z=j, D_e, D_o, \hyp{Z}{c}) =\frac{N_{jk}^o+N_{jk}^e+\alpha_{jk}}{N_{j}^o+N_{j}^e+\alpha_{j}}  \)\\\hline
      Terms in Eq. \ref{eq:prob_est}&\(\displaystyle P(Y=k|x, Z=j, D_e, D_o, \hyp{Z}{\overline c}) =\frac{N_{jk}^e+\alpha_{jk}}{N_{j}^e+\alpha_{j}}  \)\\\hline
    \end{tabular}
    \caption{Closed-form solutions for Eq. \ref{eq:exp_score} -\ref{eq:prob_est}} in the main paper, for multinomial distributions with Dirichlet priors. Subscript $jk$ refers variable \var Y taking its $k$-th configuration, and variable set $\vars Z$ taking its $j$-th configuration. $\alpha_{jk}$ is the prior for the Dirichlet distribution. We set $\alpha_{jk}=1$ in all experiments. $N^o_{jk}, N^e_{jk}$ corresponds to counts in the data where $Y=k$ and  $\vars Z=j$ in $D_o$ and $D_e$, respectively. $N^o_{j}, N^e_{j}$ corresponds to counts in the data where $Z=j$. Tilde notation corresponds to the OMB $\vars U$.
    \label{tab:equations}
\end{table}

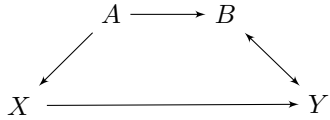
\begin{figure}[h!]
\centering
\begin{tikzpicture}[>=latex']
\tikzstyle{every node}=[draw]
\node (X) [circle, draw =none] at (0,0){$X$};%
\node (A)[draw=none, above right =1 of X]{$A$};
\node (B)[draw=none, right =1 of A]{$B$};
\node (Y)[draw=none, below right =1 of B]{$Y$};

\path (X) edge[->] (Y);
\path (A) edge[->] (X);
\path (A) edge[->] (B);
\path (B) edge[<->] (Y);
\end{tikzpicture}
\caption{\label{sup:fig:adjustment}An example where a CMB does not necessarily correspond to an optimal adjustment set \citep{henckel2019graphical}. $\cmby =\{\{X, A, B\}\}$, but the optimal adjustment set depends on the parameters.}
\end{figure}

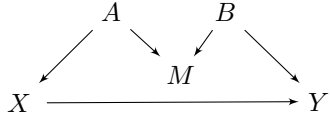
\begin{figure}[h!]
\centering
\begin{tikzpicture}[>=latex']
\tikzstyle{every node}=[draw]
\node (X) [circle, draw =none] at (0,0){$X$};%
\node (A)[draw=none, above right =1 of X]{$A$};
\node (B)[draw=none, right =1 of A]{$B$};
\node (M)[draw=none, below right =.5 of A]{$M$};
\node (Y)[draw=none, below right =1 of B]{$Y$};

\path (X) edge[->] (Y);
\path (A) edge[->] (X);
\path (A) edge[->] (M);
\path (B) edge[->] (Y);
\path (B) edge[->] (M);
\end{tikzpicture}
\caption{\label{sup:fig:mbias}The m-bias graph used to simulate data for Fig \ref{fig:results_bias_alpha}. 
\var A and \var B are unobserved. All variables were binary. Parameters were as follows: $P(A=1)=.8, P(B=1) = .8, P(M=1|A=1, B=1) =\alpha, P(X=1|A=1) = \alpha, P(Y=1|X=1, B=1)=\alpha$. All other parameters $P(Y=1|\dots),P(M=1|\dots), P(X=1|A=0)$ were set to zero.}
\end{figure}

\section{Proofs}
In this section, we provide a proof that every causal Markov boundary is backdoor set, which is defined below (Definition \ref{def:backdoorset}). We make the following assumptions throughout the entire document:
\begin{itemize}
    \item \var X causes \var Y
    \item all variables \vars V are pre-treatment.
\end{itemize}

\begin{definition}[Backdoor Set] \label{def:backdoorset}
\vars Z is a backdoor set for \var X, \var Y if and only if \vars Z m-separates \var X and \var Y in $\graph G_{\underline X}$.
\end{definition}
We use the following definitions from \citep{shpitser2006a}:

\begin{definition}[C-component]\label{def:C-component}
A C-component is as set of nodes \var S in \graph G where every two nodes are connected by a bidirected path.
\end{definition}

\begin{definition}[C-forest]\label{def:C-forest}
A graph \graph G where the set of all of its nodes is a C-component, and each node has at most one child is a C-forest. The set of nodes \vars R without children in the C-forest is called the root, and we say that \graph G is an \vars R-rooted C-forest. 
\end{definition}

C-forests are useful for defining hedges: 
\begin{definition}[hedge]
Let \vars X,\vars Y be sets of variables in \graph G. Let $F, F'$ be \vars R-rooted C-forests in \graph G such that $F'$  is a subgraph of \var F, \vars X only occurs in \var F, and $\vars R\in An(\vars Y)_{\graph G_{\overline X}}$. Then $F, F'$  form a hedge for $P(\vars Y|do(\vars X))$.
\end{definition}
The existence of a hedge for $P(\vars Y|do(\vars X))$ in \graph G is equivalent to the non-identifiability of $P(\vars Y|do(\vars X))$ (see Theorem 4 in \citep{shpitser2006a}).

\begin{lemma}\label{lem:ZnotBackdoor}
Let \vars Z be a set that is not a subset of any backdoor set (i.e., there exists no set $\vars Q\subseteq (\vars V\setminus \vars Z)$ such that $\vars Q\cup \vars Z$ m-separate $X$ and $Y$ in $\graph G_{\underline X}$). Then there exists in \graph G a bi-directed path from \var X to \var Y where every collider has a descendant in $\vars Z\cup Y$.  
\end{lemma}
\begin{proof}
The proof is a special case of Theorem 4.2 $(iv)\Rightarrow (ii)$ in \citep{richardson2002ancestral} with $\vars S \leftarrow \vars Z, \vars L\leftarrow \emptyset, \graph G \leftarrow\graph G_{\underline X}$. The proof is for ancestral graphs, but it is straightforward to show that it holds for SMCMs, given that every SMCM \graph G can be transformed to a maximal ancestral graph \graph M over the same nodes (by adding some edges) such that (a) \graph G and \graph M entail the exact same m-separations and m-connections and (b) the exact same ancestral relationships hold in both graphs. The theorem proves that if $\forall \vars Q\subseteq \vars (V\setminus \vars Z), \vars Z\cup \vars Q$ do not m-separate \var X and \var Y in $\graph G_{\underline X}$, then there exists a bidirected path between \var X and \var Y in  $\graph G_{\underline X}$ where every variable is an ancestor of some variables in $\vars Z\cup \{\var X, \var Y\}$, which means that there exists a path in \graph G a bi-directed path from \var X to \var Y where every collider has a descendant in $\vars Z\cup Y$ (since $\var X\rightarrow \var Y$ by assumption).
\end{proof}

\begin{lemma}\label{lemma:subsetofbd}
Let \vars Z be a set for which $P(Y|do(X), \vars Z)$ is identifiable from $P(Y|X, \vars Z)$, then \vars Z is a subset of a backdoor set.
\end{lemma}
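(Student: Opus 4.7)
The plan is to prove the contrapositive: assuming $\vars Z$ is not a subset of any backdoor set, I construct a hedge that, by the completeness of the ID and IDC algorithms of \cite{shpitser2006a}, certifies that $P(Y|do(X), \vars Z)$ is not identifiable from the observational distribution $P(X, Y, \vars V)$.

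First, Lemma \ref{lem:ZnotBackdoor} supplies a bidirected path $\pi = X \leftrightarrow V_1 \leftrightarrow \cdots \leftrightarrow V_k \leftrightarrow Y$ in $\graph G$ on which every internal node $V_i$ is a collider with a descendant $D_i \in \vars Z \cup \{Y\}$. Because every edge of $\pi$ is bidirected, $\pi$ survives in $\graph G_{\overline X}$ and all nodes of $\pi$ lie in a single C-component of $\graph G$.

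Second, I build a hedge for a joint causal query whose conditioning surface is $\vars Z$. Setting $\vars R = \{Y\} \cup \{D_i : D_i \in \vars Z\}$, I take $F$ to consist of the nodes of $\pi$ together with, for each collider $V_i$ whose chosen descendant $D_i$ lies outside $\pi$, one shortest directed path from $V_i$ to $D_i$ in $\graph G$; I keep every bidirected edge of $\pi$ and prune the directed portion so that each node retains at most one child. I then set $F' = F \setminus \{X\}$, also dropping the incident bidirected edge $X \leftrightarrow V_1$. The bidirected edges of $\pi$ keep the node sets of both $F$ and $F'$ inside one C-component, the pruning secures the C-forest out-degree condition, and the common set of sinks is $\vars R \subseteq An(Y \cup \vars Z)_{\graph G_{\overline X}}$. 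By definition, $(F, F')$ is a hedge for $P(Y, \vars R \setminus \{Y\} \mid do(X))$.

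Third, I lift the hedge to non-identifiability of the conditional query. Theorem 4 of \cite{shpitser2006a} implies that $P(Y, \vars R \setminus \{Y\} \mid do(X))$, and hence $P(Y, \vars Z \mid do(X))$ by marginalization, is not identifiable from $P$. Completeness of IDC then says that $P(Y \mid do(X), \vars Z)$ is identifiable iff, after finitely many applications of rule 2 of do-calculus that move variables of $\vars Z$ into the intervention, the reduced unconditional query is identifiable by ID. Any candidate rule-2 conversion of a $V \in \vars Z$ requires $V$ to be m-separated from $Y$ given the remaining variables in $\graph G_{\overline{X}, \underline{V}}$, which the preserved path $\pi$ rules out; and ID fails on the reduced query because of the hedge above. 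Therefore $P(Y \mid do(X), \vars Z)$ is not identifiable, completing the contrapositive.

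The main obstacle is the hedge construction in the second step. Two delicate points are (a) enforcing the ``at most one child'' condition of a C-forest when several colliders share auxiliary directed branches that reach $\vars Z$, without breaking either the C-component structure or the ancestral containment $\vars R \subseteq An(Y \cup \vars Z)_{\graph G_{\overline X}}$; and (b) verifying that removing $X$ together with its incident bidirected edge leaves $F'$ as a C-forest with exactly the same root set as $F$, so that $(F, F')$ really satisfies the hedge definition and the reduction to IDC completeness goes through.
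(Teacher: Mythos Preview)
Your overall strategy matches the paper's: argue the contrapositive, invoke Lemma~\ref{lem:ZnotBackdoor} to obtain the bidirected path, and build a hedge witnessing non-identifiability. The difference is in how you pass from non-identifiability of the \emph{joint} $P(Y,\vars Z\mid do(X))$ to non-identifiability of the \emph{conditional} $P(Y\mid do(X),\vars Z)$, and this is where your argument has a genuine gap.

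You appeal to IDC completeness and claim that the path $\pi$ ``rules out'' every rule-2 conversion of a $V\in\vars Z$. That claim is not justified and is generally false. A rule-2 move on $V$ requires $Y\perp V\mid X,\vars Z\setminus\{V\}$ in $\graph G_{\overline X,\underline V}$. But $\pi$ is a path between $X$ and $Y$, not between $V$ and $Y$; in $\graph G_{\overline X}$ the edge $X\leftrightarrow V_1$ is deleted, so $\pi$ no longer even reaches $X$. If $V$ is a descendant $D_i$ off the path, the candidate connecting walk $D_i\leftarrow\cdots\leftarrow V_i\leftrightarrow\cdots\leftrightarrow Y$ has colliders $V_{i+1},\dots,V_k$ that need descendants in the conditioning set, and non-colliders on the directed stretch that must avoid it; neither is guaranteed. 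So rule-2 moves may well succeed, and your case analysis of IDC does not close.

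The paper sidesteps all of this with a one-line reduction you are missing: by Assumption~\ref{as:pretreatment} the covariates are pre-treatment, so $P(\vars Z\mid do(X))=P(\vars Z)$ and hence
\[
P(Y\mid do(X),\vars Z)=\frac{P(Y,\vars Z\mid do(X))}{P(\vars Z)}.
\]
Thus $P(Y\mid do(X),\vars Z)$ is identifiable iff $P(Y,\vars Z\mid do(X))$ is, and a hedge for the latter finishes the proof immediately. This also makes your elaborate hedge unnecessary: the root set $\vars R$ only needs to lie in $An(\{Y\}\cup\vars Z)_{\graph G_{\overline X}}$, not in $\{Y\}\cup\vars Z$ itself, so the bidirected path alone (with the edge $X\to Y$ included so that $X$ is not a root of $F$) already gives both $C$-forests with a common root set $\{V_1,\dots,V_k,Y\}\subseteq An(\{Y\}\cup\vars Z)_{\graph G_{\overline X}}$. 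Your auxiliary directed branches and the attendant pruning worries in your final paragraph can be dropped entirely.
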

\begin{proof}
First, notice that  $P(Y|do(X), \vars Z) =\frac{P(Y,\vars Z|do(X))}{P(Z|do(X))} = \frac{P(Y,\vars Z|do(X))}{P(Z)}$. Therefore $P(Y|do(X), \vars Z)$ is only identifiable if $P(Y, \vars Z|do(X))$ is identifiable. If \vars Z is not a subset of a backdoor set, then there exists a bidirected path where every variable has a descendant in $\vars Z\cup Y$ in \graph G by Lemma \ref{lem:ZnotBackdoor}. Let \graph F be the graph consisting of the bidirected path, and \graph F' be the same graph without \var X. Then \graph F, \graph F' are $\{Y,\vars Z\}$ rooted C-forests, and $\{Y,\vars Z\}\in An(\{Y,\vars Z\})$, so \graph F, \graph F' form a hedge for $\{Y, \vars Z\}$. Therefore, $P(Y,\vars Z|do(X))$ is not identifiable, and $P(Y|do(X), \vars Z)$ is not identifiable. 
\end{proof}
\begin{reptheorem}{the:cmbbackdoor}
We assume that $P_x$ and $\graph G_{\overline X}$ are faithful to each other. If \vars Z is a causal Markov boundary for \var Y relative to \var X, then $\vars W =\vars Z\setminus X$ is a backdoor set.
\end{reptheorem}
\begin{proof}
Assume \vars Z is a causal Markov boundary, but \vars W is  not a backdoor set. Since $P(Y|do(X), \vars W)$ is identifiable, by Lemma \ref{lemma:subsetofbd} \vars W is a subset of a backdoor set $\vars W\cup\vars Q$, where $\vars Q\subseteq (\vars V\setminus \vars W)$. Since by assumption \vars W is not a backdoor set, \vars Q is not the empty set (i.e., \vars W is a proper subset of a backdoor set). We will show that $P(Y|do(X), \vars W, \vars Q)\neq P(Y|do(X), \vars W)$. To show that, we only need to show that \vars Q is not independent from \vars W in \mang. Since \vars W is not a backdoor set, there exists a backdoor path from \var X to \var Y that is m$-$connecting given \vars W, but blocked given $\vars W\cup \vars Q$. Thus, some $Q\in \vars Q$ is a non-collider on that path, therefore \vars Q are not independent with \var Y given \vars W. Hence, $P(Y|do(X), \vars W, \vars Q)\neq P(Y|do(X), \vars W)$ and therefore \vars Z does not satisfy Condition (2), and \vars Z is not a causal Markov boundary (Contradiction).
\end{proof}

\begin{lemma}\label{lemma:mppath}
Let $\vars Z \subseteq \vars V$ be a backdoor set for $\var X, \var Y$, and let $\var Q\in (\vars Z\setminus\mb(Y))$ that  has an m-connecting path $Q\pi_{QY}Y$ with $\var Y$ given $\vars Z\setminus \var Q$. Then there exists a variable $W\in (\mb(Y)\setminus \vars Z)$ such that:  $\var W\cup \vars Z$ is a backdoor set and  $ W\nCI \var Y|\vars Z$ in  $\graph G_{\overline X}$.
\end{lemma}

\begin{proof}
Let \var Q be a variable as described above. Then there exists a variable $\var W\in\mb(Y)$ between \var Q and \var Y that is a non-collider on $\pi$, otherwise $\var Q\in Pa(Dis(Y))$, and therefore $\var Q\in \mb(Y)$. In addition, $\var W\not \in \vars Z$, otherwise $Q\pi_{QY}Y$ would be blocked given $\vars Z\setminus Q$.  We will now show, by contradiction, that adding \var W to the conditioning set \vars Z does not open any backdoor paths from \var X to \var Y; hence, $\vars Z \cup W$ is a backdoor set. 

Assume that conditioning on $\var W$ opens a path $\pi_{XY}$ between \var X and \var Y that is blocked given just $\vars Z$. Then $W$ must be a descendant of one or more colliders on that path. Let $C$ be the collider closest to $X$ on  $\pi_{XY}$ such that $C$ is blocked on $\pi_{XY}$ given \vars Z, but open given $\vars Z\cup W$. Then  $X \pi_{XC} C$ is open given \vars Z, and \var W is a descendant of \var C. Let $C\pi_{CW} W$ be the (possibly empty) directed path from $\var C$ to \var W, and let $W \pi_{WY} Y$ be the subpath of $\pi_{CY}$  from \var W to \var Y. Since $C$ is blocked on $\pi_{XY}$ given \vars Z,  no variable on $\pi_{CW}$ can be in \vars Z. But then   $X\pi_{XC} C \pi_{CW} W \pi_{WY} \ Y$ is an open path from \var X and \var Y given \vars Z in $\graph G_{\overline X}$.  Contradiction, since \vars Z is a backdoor set. Thus, \var W does not open any backdoor paths, and $\vars Z\cup W$ is also a backdoor set. 

Finally, $W$ is not independent of $Y$ given \vars Z in $\graph G_{\overline X}$, since $W \pi_{WY} Y$ is  open given \vars Z. 
\end{proof}

\begin{reptheorem}{the:cmbsubmb}
We assume that $P_x$ and $\graph G_{\overline X}$ are faithful to each other. Every causal Markov boundary \vars Z of an outcome variable  \var Y w.r.t a treatment variable \var X is a subset of the Markov boundary $\mb(Y)$.
\end{reptheorem}
\begin{proof}
We will show this by contradiction. Specifically, we will show that any set \vars Z that includes variables \vars Q  not in the Markov boundary of \var Y cannot satisfy one of the Conditions (2) or (3) of the causal Markov boundary.

Assume that \vars Z is a causal Markov boundary for \var Y with respect to  \var X.  and let $\vars W =\vars Z\setminus X$. Let $\vars Q = \vars W\setminus \mb(Y)$ be the  non-empty subset of \vars W that is not a part of the Markov boundary of \var Y. 

If there exists no $\var Q\in \vars Q$ that has an m-connecting path $Q\pi_{QY}Y$ to $\var Y$ given $\vars W\setminus \var Q$, then $\vars Q\CI Y|(\vars W\setminus \vars Q)$ in $\graph G_{\overline X}$. Conditioning on \var X cannot open any paths from \var X to \var Y;  therefore,
 $\vars Q\CI Y|X,(\vars W\setminus \vars Q)$ in $\graph G_{\overline X}$. Then by Rule 1 of the do-calculus \citep{Pearl2000}, $P(Y|do(X), \vars W) = P(Y|do(X), \vars W\setminus Q)$, and $\vars Z$ does not satisfy Condition (3) of the causal Markov boundary definition (Contradiction).
 
If there exists a $\var Q\in (\vars W\setminus\mb(Y))$ that has an m-connecting path $Q\pi_{QY}Y$ with $\var Y$ given $\vars Z\setminus \var Q$, then by Lemma \ref{lemma:mppath}, there exists a variable \var W in $\mb(Y)\setminus \vars Z$ such that $\vars Z\cup W$ is also a backdoor set, and $\var W\nCI Y|X,\vars Z$ in $\graph G_{\overline X}$. Then  $P(Y|do(X), \vars Z,W) \neq P(Y|do(X), \vars Z)$. Thus, \vars Z does not satisfy Condition (2) of the Causal Markov boundary definition (Contradiction).

Thus, \vars Z cannot include any variables that are not in the Markov boundary of \var Y.
\end{proof}

\begin{reptheorem}{the:imbcmbismb}
Let \graph G be a SMCM over \var X, \var Y,\vars V with \vars V occurring before \var X and \var Y. Let $\vars Z\subseteq \vars V\cup X$ be the IMB of \var Y relative to \var X. If \vars Z is a causal Markov boundary, then $\mb(Y)=\vars Z$.
\end{reptheorem}
\begin{proof}
$\imby\subseteq\mby$, so we need to show that $\mby\subseteq\imby$ when $\imby\in\cmby$. Assume that \vars Z is both the \imby and a causal Markov boundary, but there exists a variable \var Q in $\vars Z$ that is not in \mby. Then  \var Q  is reachable from \var Y through a bidirected path in \graph G but not in $\graph G_{\overline X}$. Since \graph G and $\graph G_{\overline X}$ only differ in edges that are into \var X, this path must be going through an edge that is incoming into  X. Thus, \graph G includes a bidirected path $Y\leftrightarrow \dots \leftrightarrow X$, and every variable on this path is in \imby =\vars Z. But then $\vars Z\setminus X$ cannot be a backdoor set, and by Theorem \ref{the:cmbbackdoor} \vars Z cannot be a causal Markov boundary. Contradiction. Thus, the Markov boundary of \var Y cannot include any more variables than $\vars Z$.
\end{proof}

\section{Convergence Proof for Observational Markov Boundary (OMB)}

\begin{definition}[Conditional Entropy] \label{def:condentropy}
Let $P$ be the full joint probability distribution over a set of variables \vars V, let $Y\in \vars V$ be a variable, and let $\vars Z \subseteq \vars V \setminus \{Y\}$ be a set of  variables. Then, the conditional entropy of \var Y given \vars Z is defined as follows \citep{cover1999elements}:
\begin{equation}\label{eq:entDisc}
        H(Y|\vars Z) = - \sum_{y}{\sum_{z}{P(y,z)\cdot \log P(y|z)}}
\end{equation}
where $y$ and $z$ denote the values of $Y$ and $\vars Z$, respectively. 
\end{definition}

\begin{lemma}\label{lem:ent}
Let $X,Y \in \vars V$ be two variables and $\vars Z  \subseteq \vars V \setminus \{X,Y\}$ be a set of  variables. Then, $H(Y|\vars Z) \geq H(Y|X, \vars Z)$, where the entropies are defined by Definition \ref{def:condentropy}, and the equality holds if and only if $Y \CI X | \vars Z$.
\end{lemma}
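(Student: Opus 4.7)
The plan is to recognize this as the standard ``conditioning reduces entropy'' inequality, which follows from the non-negativity of conditional mutual information, together with a careful treatment of equality via strict concavity of the logarithm. First I would expand the difference directly from Definition \ref{def:condentropy}, writing $P(y,z) = \sum_x P(x,y,z)$ and $P(y \mid x,z) = P(x,y,z)/P(x,z)$, and collecting terms to identify
\[
H(Y \mid \vars Z) - H(Y \mid X, \vars Z) \;=\; \sum_{x,y,z} P(x,y,z)\,\log\frac{P(x,y \mid z)}{P(x \mid z)\,P(y \mid z)},
\]
where the sum ranges over triples with $P(x,y,z) > 0$ and we adopt the usual convention $0 \log 0 = 0$. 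This quantity is precisely the conditional mutual information $I(X;Y \mid \vars Z)$.

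Next I would show that this quantity is non-negative. The cleanest route is to slice by $\vars Z = z$: for each fixed $z$ with $P(z) > 0$, the inner sum equals the Kullback--Leibler divergence $D\bigl(P(X,Y \mid z)\,\|\,P(X \mid z)\,P(Y \mid z)\bigr)$, which is non-negative by Gibbs' inequality. Averaging over $z$ with weights $P(z) \geq 0$ preserves non-negativity, establishing $H(Y \mid \vars Z) \geq H(Y \mid X, \vars Z)$. Equivalently, one can apply Jensen's inequality to the convex function $-\log$ in a single step, since $\mathbb{E}\bigl[P(X \mid \vars Z)P(Y \mid \vars Z)/P(X,Y \mid \vars Z)\bigr] = 1$ on the support.

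For the equality characterization, I would invoke strict convexity of $-\log$: Gibbs' inequality yields equality in the slice at $\vars Z = z$ if and only if $P(x,y \mid z) = P(x \mid z)\,P(y \mid z)$ for every $(x,y)$ with $P(x,y,z) > 0$. Averaging over $z$, the overall equality $H(Y \mid \vars Z) = H(Y \mid X, \vars Z)$ holds iff this factorization is satisfied for every triple in the support, i.e. iff $Y \CI X \mid \vars Z$. The converse direction is immediate: if $Y \CI X \mid \vars Z$, then $P(y \mid x,z) = P(y \mid z)$ wherever defined, so the two entropies coincide term by term.

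The main obstacle is essentially bookkeeping rather than conceptual: one must handle zero-probability configurations consistently (using the $0 \log 0 = 0$ convention and restricting sums to the support), and be precise that the equality in Jensen/Gibbs requires the factorization on every $(x,y,z)$ with positive joint probability, which is exactly what the conditional independence statement $Y \CI X \mid \vars Z$ asserts. No new machinery beyond classical information-theoretic inequalities is needed.
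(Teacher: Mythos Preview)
Your proposal is correct and follows essentially the same approach as the paper: both identify the difference $H(Y\mid\vars Z)-H(Y\mid X,\vars Z)$ as the conditional mutual information $I(X;Y\mid\vars Z)$ and invoke its non-negativity together with the equality condition. The only difference is granularity: the paper simply cites \citet{cover1999elements} for $I(X;Y\mid\vars Z)\geq 0$ and the equality characterization, whereas you unpack that fact via the slice-wise KL divergence and strict convexity of $-\log$.
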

\begin{proof}
Applying the chain rule of entropy, the conditional mutual information can be computed as follows~\citep{cover1999elements}:
\begin{equation}
        I(X; Y|\vars Z) = H(Y|\vars Z) - H(Y|X,\vars Z)  \,.
\end{equation}
Given that the mutual information is nonnegative (i.e., $I(X; Y|\vars Z) \geq 0$) and $I(X; Y|\vars Z) = 0$ if and only if $Y \CI X |  \vars Z$ (see~\citep{cover1999elements}, page 29), it follows that:
\begin{equation}
\begin{split}
        H(Y|\vars Z) - H(Y|X,\vars Z) \geq 0\\
        H(Y|\vars Z)  \geq H(Y|X,\vars Z) 
         \,,
\end{split}
\end{equation}
where the equality holds if and only if $Y \CI X | \vars Z$. 
\end{proof}



For brevity, let $\vars V = \{\vars V \cup \var X \}$, where \var X is a treatment variable, and let \var Y be an outcome variable in the remainder of this section.

\begin{lemma}\label{lem:entMB}
All Markov blankets of $Y$ have the same entropy.
\end{lemma}
\begin{proof}
By definition, $\vars Z'$ is the Markov blanket of $Y$ if and only if $P(Y | \vars Z', \vars W) = P(Y | \vars Z')$ for any $\vars W \subseteq \vars V \setminus \vars Z'$, which indicates that $Y \CI \vars W | \vars Z'$. Also, according to Lemma \ref{lem:ent}, $H(Y|\vars Z') = H(Y|\vars Z', \vars W)$ for any $\vars W \subseteq \vars V \setminus \vars Z'$. Let $\vars Z$  also be a Markov blanket of $Y$. By multiple applications of Lemma \ref{lem:ent}, we obtain:
\begin{equation}
H(Y|\vars Z') = H(Y|\vars Z', \vars V \setminus \vars Z') = H(Y|\vars V)= H(Y|\vars Z, \vars V \setminus \vars Z')=H(Y|\vars Z)
\end{equation}
\end{proof}

\begin{lemma}\label{lem:entZneqZ'}
Let $\vars Z'$ be a Markov blanket of $Y$ and let $\vars Z$   be a set of variables that is not a Markov blanket of $Y$. Then, $H(Y | \vars Z') < H(Y | \vars Z)$, where the entropies are defined by Definition \ref{def:condentropy}.
\end{lemma}

\begin{proof}
Assume there is exists a set $\vars W \subseteq \vars V \setminus \vars Z$ such that $P(Y | \vars Z, \vars W) \neq P(Y | \vars Z)$. According to Lemma \ref{lem:ent} we have:
\begin{equation}
H(Y | \vars Z, W) < H(Y | \vars Z). 
\end{equation}
Also, given that $\vars V$ is a superset of $(\vars Z\cup \vars W)$, we have:
\begin{equation}
    H(Y | \vars V) \leq  H(Y | \vars Z, \vars W).
\end{equation}
Therefore,
\begin{equation}\label{eq:entVZ'}
    H(Y | \vars V) <  H(Y | \vars Z).
\end{equation}
Also, since $\vars Z'$ is a Markov blanket of $Y$, by Lemma \ref{lem:entMB} we have:
\begin{equation}\label{eq:entVZ}
    H(Y | \vars Z') =  H(Y | \vars V).
\end{equation}
Combining Equations (\ref{eq:entVZ'}) and (\ref{eq:entVZ}), we obtain:
\begin{equation}\label{eq:entzz'}
    H(Y | \vars Z') <  H(Y | \vars Z).
\end{equation}
\end{proof}

\begin{lemma}\label{lem:BDMB}
Given dataset $D_o$ that contains samples from a strictly positive distribution $P$, which is a perfect map for a SMCM \graph G, the BD score ~\citep{heckerman1995learning} for $\log P(D_o|\vars Z)$ is defined as follows in the large sample limit:
\begin{equation}\label{eq:BDMB}
\lim_{N \rightarrow \infty} \log P(D_o|\vars Z) = \lim_{N \rightarrow \infty} -N \cdot H(Y|\vars Z) - \frac{q \cdot (r-1)}{2} \log N + const. ,
\end{equation}
\end{lemma}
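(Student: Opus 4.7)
The plan is to start from the exact closed-form expression of the BD marginal likelihood under multinomial-Dirichlet conjugacy and then extract its leading asymptotic behavior via Stirling's approximation. Specifically, for outcome \var Y with $r$ states conditioned on a candidate parent set \vars Z with $q$ joint configurations, and Dirichlet hyperparameters $\alpha_{jk}$ with $\alpha_j=\sum_k \alpha_{jk}$, the BD score is
\begin{equation*}
P(D_o|\vars Z) = \prod_{j=1}^q \frac{\Gamma(\alpha_j)}{\Gamma(\alpha_j+N_j^o)} \prod_{k=1}^r \frac{\Gamma(\alpha_{jk}+N_{jk}^o)}{\Gamma(\alpha_{jk})}.
\end{equation*}
Taking logarithms splits the score into $N$-dependent and prior-only pieces.

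Next I would apply Stirling's formula $\log\Gamma(x) = x\log x - x - \tfrac{1}{2}\log x + \tfrac{1}{2}\log(2\pi)+O(1/x)$ to each $\log\Gamma(\alpha_j+N_j^o)$ and $\log\Gamma(\alpha_{jk}+N_{jk}^o)$. Since \var P is strictly positive, the strong law of large numbers gives $N_{jk}^o/N \to P(Y{=}k,\vars Z{=}j) > 0$, so each $N_{jk}^o$ and $N_j^o$ grows linearly in $N$, and the $\alpha$ additive shifts are negligible in the leading expansion. Collecting the $O(N)$ contributions across the double sum recovers the empirical log-likelihood $\sum_{j,k} N_{jk}^o \log(N_{jk}^o/N_j^o)$; by the law of large numbers this quantity equals $-N\cdot \hat H_N(Y|\vars Z)$ where the empirical conditional entropy $\hat H_N(Y|\vars Z) \to H(Y|\vars Z)$ almost surely, matching Definition \ref{def:condentropy}.

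The $O(\log N)$ contributions come exclusively from the $-\tfrac{1}{2}\log x$ terms in Stirling's expansion. Each $\log\Gamma(\alpha_{jk}+N_{jk}^o)$ contributes $-\tfrac12 \log N_{jk}^o = -\tfrac12\log N + O(1)$ and each $\log\Gamma(\alpha_j+N_j^o)$ contributes $+\tfrac12\log N+O(1)$. Summing gives $-\tfrac12[qr - q]\log N = -\tfrac{q(r-1)}{2}\log N$, which is precisely the BIC penalty in the stated expression. The remaining pieces ($\log\Gamma(\alpha_j)$, $\log\Gamma(\alpha_{jk})$, the $\tfrac{1}{2}\log(2\pi)$ constants from Stirling, and the prior ratios that do not depend on $N$) are absorbed into the $\mathit{const.}$ term.

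The main obstacle I anticipate is purely bookkeeping: carefully tracking which $O(N)$, $O(\log N)$, and $O(1)$ pieces survive when we combine the $qr$ numerator Gamma terms with the $q$ denominator Gamma terms, especially verifying that the $-N_{jk}^o$ and $+N_j^o$ linear-in-$N$ terms that arise from Stirling (separate from the $x\log x$ pieces) cancel exactly using $\sum_k N_{jk}^o = N_j^o$, so that the only linear-in-$N$ term that remains is the empirical log-likelihood. Once this cancellation is established and $\hat H_N(Y|\vars Z) \to H(Y|\vars Z)$ is invoked, the claimed asymptotic equality follows.
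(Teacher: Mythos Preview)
Your proposal is correct and follows essentially the same route as the paper: start from the closed-form BD marginal likelihood, take logs, apply Stirling's approximation to each $\Gamma$ term, use $\sum_k N_{jk}^o = N_j^o$ (and the analogous identity for the $\alpha$'s) to cancel the stray linear-in-$N$ pieces, identify the surviving $O(N)$ part as the empirical conditional log-likelihood $-N\cdot H(Y|\vars Z)$, and count the $-\tfrac12\log$ contributions to obtain the $-\tfrac{q(r-1)}{2}\log N$ penalty. Your explicit invocation of strict positivity (so every cell count diverges and Stirling is justified) and of the law of large numbers for $\hat H_N(Y|\vars Z)\to H(Y|\vars Z)$ are points the paper leaves implicit, but the overall argument is the same.
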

\begin{proof}
The BD score for $P(D_o|\vars Z)$ is calculated as follows~\citep{heckerman1995learning}:
\begin{equation}\label{eq:BD}
 P(D_o|\vars Z) =  \prod_{j=1}^{q}{\frac{\mathrm{\Gamma} (\alpha_{j})}{\mathrm{\Gamma} (\alpha_{j}+N_{j})} \cdot \displaystyle \prod_{k=1}^{r}\frac{\mathrm{\Gamma} (\alpha_{jk}+N_{jk})}{\mathrm{\Gamma} (\alpha_{jk})}} \,,
\end{equation}
where $q$ denotes instantiations of variables in $\vars Z$ and $r$ denotes values of variable $Y$. The term $N_{jk}$  is the number of cases in data in which variable $Y = k$ and its parent $\vars Z=j$; also, $N_{j}=\sum_{k=1}^{r}{N_{jk}}$.  The term $\alpha_{jk}$ is a finite positive real number that is called Dirichlet prior parameter and may be interpreted as representing ``pseudo-counts'', where $\alpha_{j} = \sum_{k=1}^{r}{\alpha_{jk}}$. BD can be re-written in $log$ form as follows:
\begin{equation}\label{eq:logIS}
 \log  P(D_o|\vars Z) =
 \sum_{j=1}^{q}{\left[{\log \Gamma (\alpha_{j})}-{\log\Gamma (\alpha_{j}+N_{j})} + \sum_{k=1}^{r}{\left[\log\Gamma (\alpha_{jk}+N_{jk})-\log\Gamma (\alpha_{jk})\right]}\right]}  .
\end{equation}
We can re-arrange the terms in Eq. (\ref{eq:logIS}) to gather the constant terms as follows:\vspace{-0.5mm}
\begin{equation}\label{eq:logIS2}
\begin{split}
    \log P(D_o|\vars Z) & = \sum_{j=1}^{q}{\left[ -{\log\Gamma (\alpha_{j}+N_{j})} + \sum_{k=1}^{r}{\log \mathrm{\Gamma} (\alpha_{jk} + N_{jk})}\right]} + \sum_{j=1}^{q}{\left[\log \mathrm{\Gamma} (\alpha_{j})  - \sum_{k=1}^{r}{\log \mathrm{\Gamma} (\alpha_{jk})}\right]} \\
    & = \sum_{j=1}^{q}{\left[ -{\log\Gamma (\alpha_{j}+N_{j})} + \sum_{k=1}^{r}{\log \mathrm{\Gamma} (\alpha_{jk} + N_{jk})}\right]} + const.
\end{split}
\raisetag{6ex}
\end{equation}

Using the Stirling’s approximation of $\lim_{n \rightarrow \infty} \log \mathrm{\Gamma}(n) = (n-\frac{1}{2}) \log (n) - n + const.$, we can re-write Eq. (\ref{eq:logIS2}) as follows:

\begin{equation}\label{eq:logIS_Strling}
\begin{split}
  & \lim_{N \rightarrow \infty} \log P(D_o|\vars Z) \\
  &= \lim_{N \rightarrow \infty} \sum_{j=1}^{q}\Biggl[ - (\alpha_{j}+N_{j} - \frac{1}{2}) \log (\alpha_{j}+N_{j}) + (\alpha_{j}+N_{j}) 
 + \sum_{k=1}^{r}{\left((\alpha_{jk} + N_{jk} - \frac{1}{2}) \log (\alpha_{jk} + N_{jk})- (\alpha_{jk} + N_{jk})\right)}  \Biggl] + const. \\
&= \lim_{N \rightarrow \infty}\sum_{j=1}^{q} \Biggl[ - \alpha_{j} \log (\alpha_{j} + N_{j}) - N_{j} \log (\alpha_{j} + N_{j})  + \frac{1}{2} \log (\alpha_{j} + N_{j}) + \alpha_{j} + N_{j}\\ & + \sum_{k=1}^{r}  {\left( \alpha_{jk} \log (\alpha_{jk} + N_{jk}) + N_{jk} \log (\alpha_{jk} + N_{jk}) - \frac{1}{2} \log (\alpha_{jk} + N_{jk}) - \alpha_{jk} - N_{jk}\right)}
 \Biggl] + const. \\
  &= \lim_{N \rightarrow \infty} \sum_{j=1}^{q} \Biggl[ - N_{j} \log (\alpha_{j} + N_{j})+ \sum_{k=1}^{r}  N_{jk} \log (\alpha_{jk} + N_{jk}) \Biggl] + \sum_{j=1}^{q} \Biggl[ -\alpha_{j} \log (\alpha_{j} + N_{j}) + \sum_{k=1}^{r} \alpha_{jk} \log (\alpha_{jk} + N_{jk}) \Biggl] \\
  &+ \frac{1}{2} \sum_{j=1}^{q} \Biggl[  \log (\alpha_{j} + N_{j})  - \sum_{k=1}^{r}  \log (\alpha_{jk} + N_{jk})
   + \alpha_{j}  + N_{j} - \sum_{k=1}^{r} \left(\alpha_{jk} + N_{jk}\right)\Biggl] + const. \\
  &= \lim_{N \rightarrow \infty} \sum_{j=1}^{q} \Biggl[ - N_{j} \log (\alpha_{j} + N_{j})+ \sum_{k=1}^{r}  N_{jk} \log (\alpha_{jk} + N_{jk}) \Biggl]
  + \sum_{j=1}^{q} \Biggl[ -\alpha_{j} \log (\alpha_{j} + N_{j}) + \sum_{k=1}^{r} \alpha_{jk} \log (\alpha_{jk} + N_{jk}) \Biggl] \\
  &+ \frac{1}{2} \sum_{j=1}^{q} \Biggl[  \log (\alpha_{j} + N_{j})  - \sum_{k=1}^{r}  \log (\alpha_{jk} + N_{jk})
 \Biggl] + const.
 \end{split}
\raisetag{6ex}
 \end{equation}
 In the last step of Eq. (\ref{eq:logIS_Strling}), we used the facts that 
$\sum_{k=1}^{r}{N_{jk}}=N_{j}$ and $\sum_{k=1}^{r}{\alpha_{jk}}=\alpha_{j}$, and we  applied these identities again to that equation to obtain the following: 
\begin{equation}\label{eq:logISStrling2}
\begin{split}
 & \lim_{N \rightarrow \infty} \log  P(D_o|\vars Z) =\\
 & \lim_{N \rightarrow \infty} \sum_{j=1}^{q} \sum_{k=1}^{r} \Biggl[ N_{jk} \log (\frac{\alpha_{jk} + N_{jk}}{\alpha_{j} + N_{j}}) + \alpha_{jk} \log (\frac{\alpha_{jk} + N_{jk}}{\alpha_{j} + N_{j}}) \Biggl] 
+ \frac{1}{2} \sum_{j=1}^{q}  \Biggl[ \log (\alpha_{j} + N_{j}) - \sum_{k=1}^{r}  \log (\alpha_{jk} + N_{jk})  \Biggl] + const. 
 \end{split}
 \end{equation}

\noindent Given that
 \begin{equation*}
     \lim_{N \rightarrow \infty} \frac{\alpha_{jk} + N_{jk}}{\alpha_{j}+N_{j}} = \frac{N_{jk}}{N_{j}}
 \end{equation*}
and 
 \begin{equation*}
     \lim_{N \rightarrow \infty} \sum_{j=1}^{q} \sum_{k=1}^{r} \alpha_{jk} \log (\frac{\alpha_{jk}+N_{jk}}{\alpha_{j}+N_{j}}) = const. ,
 \end{equation*}
in the limit, Eq. (\ref{eq:logISStrling2}) becomes:
\begin{equation}
\lim_{N \rightarrow \infty} \log  P(D_o|\vars Z) = \lim_{N \rightarrow \infty} \sum_{j=1}^{q} \sum_{k=1}^{r}  N_{jk} \log \frac{N_{jk}}{N_{j}} + \frac{1}{2} \sum_{j=1}^{q}  \Biggl[ \log (\alpha_{j}+N_{j}) - \sum_{k=1}^{r}  \log (\alpha_{jk}+N_{jk})  \Biggl] + const.  ,
\end{equation}
or equivalently:
\begin{equation}\label{eq:logISStrlinbf}
\begin{split}
\lim_{N \rightarrow \infty} \log P(D_o|\vars Z) &=
\lim_{N \rightarrow \infty} N \cdot \sum_{j=1}^{q} \sum_{k=1}^{r}  \frac{N_{jk}}{N} \log \frac{N_{jk}}{N_{j}} + \frac{1}{2} \sum_{j=1}^{q}  \Biggl[ \log (\alpha_{j}+N_{j}) - \sum_{k=1}^{r}  \log (\alpha_{jk}+N_{jk})  \Biggl] + const.\\
  & =\lim_{N \rightarrow \infty} -N \cdot H(Y|\vars Z) + \frac{1}{2} \sum_{j=1}^{q}  \Biggl[ \log (\alpha_{j}+N_{j}) - \sum_{k=1}^{r}  \log (\alpha_{jk}+N_{jk})  \Biggl] + const.
\end{split}
\end{equation}
To simplify the second term in Eq. (\ref{eq:logISStrlinbf}), we divide the arguments in the log terms by $N$ and equivalently add $\log N$ terms as follows:
\begin{equation}\label{eq:logISStrlingSecondTerm}
\begin{split}
\lim_{N \rightarrow \infty}\frac{1}{2}  \sum_{j=1}^{q}  \Biggl[ \log (\alpha_{j}+N_{j}) &- \sum_{k=1}^{r}  \log (\alpha_{jk}+N_{jk})  \Biggl]=\lim_{N \rightarrow \infty} \frac{1}{2} \sum_{j=1}^{q}  \Biggl[ \log (\frac{\alpha_{j}+N_{j}}{N}) + \log N - \sum_{k=1}^{r}  \log (\frac{\alpha_{jk}+N_{jk}}{N}) +\log N  \Biggl] \\
&= \lim_{N \rightarrow \infty} \frac{1}{2} \sum_{j=1}^{q}   \left( \log N - \sum_{k=1}^{r}  \log N  \right) + \frac{1}{2} \sum_{j=1}^{q}  \Biggl[ \log (\frac{\alpha_{j}+N_{j}}{N}) - \sum_{k=1}^{r} \log (\frac{\alpha_{jk}+N_{jk}}{N}) \Biggl] \\
& = -\frac{q(r - 1)}{2} \log N + const. 
\end{split}
\end{equation}
Combining  Equations (\ref{eq:logISStrlinbf}) and (\ref{eq:logISStrlingSecondTerm}), we obtain:
\begin{equation}\label{eq:logISStrling3}
\lim_{N \rightarrow \infty} \log P(D_o|\vars Z) =\lim_{N \rightarrow \infty} -N \cdot H(Y|\vars Z')  -\frac{q \cdot (r- 1)}{2} \log N + const.
\end{equation}
\end{proof}


\begin{reptheorem}{the:fgesmb}
Given dataset $D_o$ that contains samples from a strictly positive distribution $P$, which is a perfect map for a SMCM \graph G, the BD score~\citep{heckerman1995learning} will assign the highest score to the OMB of $Y$ in the large sample limit.
\end{reptheorem}

\begin{proof}
Let $\vars Z'$ be the OMB of $Y$ and $\vars Z \subseteq \vars V$ be an arbitrary set. We want to show that:
\begin{equation}\label{eq:bscd}
  \lim_{N \rightarrow \infty}\frac{P(D_o|\vars Z)}{P(D_o| \vars Z')} = \begin{cases}
             1 & \text{iff $\vars Z$ is an OMB of $Y$}\\
             0 & \text{otherwise} 
          \end{cases}  \enspace ,
\end{equation}
Applying Lemma \ref{lem:BDMB} we have:
\begin{equation}\label{eq:ratio}
\lim_{N \rightarrow \infty}\log \frac{P(D_o|\vars Z)}{P(D_o| \vars Z')} = \lim_{N \rightarrow \infty}N \cdot \left[ H(Y|\vars Z')- H(Y|\vars Z)\right]  + \frac{(q^{\prime} - q) \cdot (r-1)}{2} \log N .
\end{equation}
where $q$ and $q'$ are the number of possible parent instantiations of $Y$ with $\vars Z$ and $\vars Z'$ as the set of parents. There are three possible cases:\\

\textbf{Case 1:} $\vars Z$ is a Markov blanket of $Y$ and its OMB.

Since both $\vars Z'$ and $\vars Z$ are Markov blankets of $Y$, $H(Y|\vars Z)=H(Y|\vars Z')$ by Lemma~\ref{lem:entMB}. Thus, the first term in Eq. (\ref{eq:ratio}) becomes $0$.
Also, given that $\vars Z'$ and $\vars Z$ are OMBs, they have the same number of parameters $q'= q$, by which the second term in Eq. (\ref{eq:ratio}) becomes $0$ in the limit as $N \rightarrow \infty$, or equivalently Eq. (\ref{eq:bscd}) approaches to 1.
\\

\textbf{Case 2:} $\vars Z$ is a Markov blanket of $Y$ but not its OMB.

According to Lemma~\ref{lem:entMB} $H(Y|\vars Z)=H(Y|\vars Z')$; therefore, the first term in Eq. (\ref{eq:ratio}) becomes $0$ and we obtain:
\begin{equation}\label{eq:case2}
\lim_{N \rightarrow \infty}\frac{P(D_o|\vars Z)}{P(D_o| \vars Z')} = \lim_{N \rightarrow \infty} \frac{(q^{\prime} - q) \cdot (r-1)}{2} \log N .
\end{equation}
Given that $\vars Z'$ is the OMB with minimum number of variables, and therefore, minimum number of parameters $q^{\prime} < q$. Thus, the term $(q^{\prime}-q)$ becomes a negative constant. Also, the term $\frac{(r-1)}{2}$ is a positive constant. Consequently, Eq. (\ref{eq:case2}) goes to $-\infty$ in the limit as $N \rightarrow \infty$, which implies that Eq. (\ref{eq:bscd}) approaches to 0.
\\

\textbf{Case 3:} $\vars Z$ is not a Markov blanket of $Y$.

The first term in Eq. (\ref{eq:ratio}) is of $O(N)$ and dominates the second term, which is  $O(\log N)$. 
According to Lemma \ref{lem:entZneqZ'}, $H(Y|\vars Z') < H(Y|\vars Z)$; thus, the term $H(Y|\vars Z') -H(Y|\vars Z)$ becomes a negative number. As a result, Eq. (\ref{eq:ratio}) becomes $- \infty$, which equivalently implies that Eq. (\ref{eq:bscd}) becomes 0.
\end{proof}

\end{document}


\onecolumn
\maketitle
\section{Proofs}
In this section, we provide a proof that every causal Markov boundary is backdoor set, which is defined below (Definition \ref{def:backdoorset}). We make the following assumptions throughout the entire document:
\begin{itemize}
    \item \var X causes \var Y
    \item all variables \vars V are pre-treatment.
\end{itemize}

\begin{definition}[Backdoor Set] \label{def:backdoorset}
\vars Z is a backdoor set for \var X, \var Y if and only if \vars Z m-separates \var X and \var Y in $\graph G_{\underline X}$.
\end{definition}
We use the following definitions from \citep{shpitser2006a}:

\begin{definition}[C-component]\label{def:C-component}
A C-component is as set of nodes \var S in \graph G where every two nodes are connected by a bidirected path.
\end{definition}

\begin{definition}[C-forest]\label{def:C-forest}
A graph \graph G where the set of all of its nodes is a C-component, and each node has at most one child is a C-forest. The set of nodes \vars R without children in the C-forest is called the root, and we say that \graph G is an \vars R-rooted C-forest. 
\end{definition}

C-forests are useful for defining hedges: 
\begin{definition}[hedge]
Let \vars X,\vars Y be sets of variables in \graph G. Let $F, F'$ be \vars R-rooted C-forests in \graph G such that $F'$  is a subgraph of \var F, \vars X only occurs in \var F, and $\vars R\in An(\vars Y)_{\graph G_{\overline X}}$. Then $F, F'$  form a hedge for $P(\vars Y|do(\vars X))$.
\end{definition}
The existence of a hedge for $P(\vars Y|do(\vars X))$ in \graph G is equivalent to the non-identifiability of $P(\vars Y|do(\vars X))$ (see Theorem 4 in \citep{shpitser2006a}).

\begin{lemma}\label{lem:ZnotBackdoor}
Let \vars Z be a set that is not a subset of any backdoor set (i.e., there exists no set $\vars Q\subseteq (\vars V\setminus \vars Z)$ such that $\vars Q\cup \vars Z$ m-separate $X$ and $Y$ in $\graph G_{\underline X}$). Then there exists in \graph G a bi-directed path from \var X to \var Y where every collider has a descendant in $\vars Z\cup Y$.  
\end{lemma}
\begin{proof}
The proof is a special case of Theorem 4.2 $(iv)\Rightarrow (ii)$ in \citep{richardson2002ancestral} with $\vars S \leftarrow \vars Z, \vars L\leftarrow \emptyset, \graph G \leftarrow\graph G_{\underline X}$. The proof is for ancestral graphs, but it is straightforward to show that it holds for SMCMs, given that every SMCM \graph G can be transformed to a maximal ancestral graph \graph M over the same nodes (by adding some edges) such that (a) \graph G and \graph M entail the exact same m-separations and m-connections and (b) the exact same ancestral relationships hold in both graphs. The theorem proves that if $\forall \vars Q\subseteq \vars (V\setminus \vars Z), \vars Z\cup \vars Q$ do not m-separate \var X and \var Y in $\graph G_{\underline X}$, then there exists a bidirected path between \var X and \var Y in  $\graph G_{\underline X}$ where every variable is an ancestor of some variables in $\vars Z\cup \{\var X, \var Y\}$, which means that there exists a path in \graph G a bi-directed path from \var X to \var Y where every collider has a descendant in $\vars Z\cup Y$ (since $\var X\rightarrow \var Y$ by assumption).
\end{proof}
\begin{lemma}

Let \vars Z be a set for which $P(Y|do(X), \vars Z)$ is identifiable from $P(Y|X, \vars Z)$, then \vars Z is a subset of a backdoor set.
\end{lemma}
\begin{proof}
First, notice that  $P(Y|do(X), \vars Z) =\frac{P(Y,\vars Z|do(X))}{P(Z|do(X))} = \frac{P(Y,\vars Z|do(X))}{P(Z)}$. Therefore $P(Y|do(X), \vars Z)$ is only identifiable if $P(Y, \vars Z|do(X))$ is identifiable. If \vars Z is not a subset of a backdoor set, then there exists a bidirected path where every variable has a descendant in $\vars Z\cup Y$ in \graph G by Lemma \ref{lem:ZnotBackdoor}. Let \graph F be the graph consisting of the bidirected path, and \graph F' be the same graph without \var X. Then \graph F, \graph F' are $\{Y,\vars Z\}$ rooted C-forests, and $\{Y,\vars Z\}\in An(\{Y,\vars Z\})$, so \graph F, \graph F' form a hedge for $\{Y, \vars Z\}$. Therefore, $P(Y,\vars Z|do(X))$ is not identifiable, and $P(Y|do(X), \vars Z)$ is not identifiable. 
\end{proof}
\begin{theorem}
If \vars Z is a causal Markov boundary for \var Y relative to \var X, then $\vars W =\vars Z\setminus X$ is a backdoor set.
\end{theorem}
\begin{proof}
Assume \vars Z is a causal Markov boundary, but \vars W is  not a backdoor set. Since $P(Y|do(X), \vars W)$ is identifiable, \vars W is a subset of a backdoor set $\vars W\cup\vars Q$, where $\vars Q\subseteq (\vars V\setminus \vars W)$. Since by assumption \vars W is not a backdoor set, \vars Q is not the empty set (i.e., \vars W is a proper subset of a backdoor set). We will show that $P(Y|do(X), \vars W, \vars Q)\neq P(Y|do(X), \vars W)$ for some \var P induced by \graph G; therefore, \vars Z cannot be a causal Markov boundary.
\begin{align*}
P(Y|do(X), \vars W) =&\sum_\vars Q P(Y|do(X), \vars W, \vars Q)P(\vars Q|do(X), \vars W)\\
&=\sum_\vars Q P(Y|do(X), \vars W, \vars Q)P(\vars Q|\vars W) &&\text{Since \vars W are pre-treatment}\\
&=\sum_\vars Q P(Y|X, \vars W, \vars Q)P(\vars Q|\vars W) &&\text{Since $\vars W\cup \vars Q$ is an adjustment set}
\end{align*} 
Since $\vars W\cup \vars Q$ is an adjustment set,  $P(Y|do(X), \vars W, \vars Q) = P(Y|X, \vars W, \vars Q)$, it is easy to construct a distribution \var P induced by a \graph G where $\sum_\vars Q P(Y|X, \vars W, \vars Q)P(\vars Q|\vars W)\neq P(Y|X, \vars W, \vars Q)$, and therefore $P(Y|do(X), \vars W) \neq P(Y|do(X),\vars W, \vars Q)$. Thus,  $P(Y|do(X), \vars W) \neq P(Y|do(X),\vars W, \vars Q)$ in every distribution entailed by \graph G, and therefore \vars Z does not satisfy Condition (2), and \vars Z is not a causal Markov boundary (Contradiction).
\end{proof}

\begin{theorem}
Every causal Markov boundary \vars Z of an outcome variable  \var Y w.r.t a treatment variable \var X is a subset of the Markov boundary $\mb(Y)$.
\end{theorem}
\begin{proof}
We will show this by contradiction. Specifically, we will show that any set \vars Z that includes variables \vars Q  not in the Markov boundary of \var Y cannot satisfy one of the Conditions (2) or (3) of the causal Markov boundary.

Assume that \vars Z is a causal Markov boundary for \var Y with respect to  \var X.  and let $\vars W =\vars Z\setminus X$. Let $\vars Q = \vars W\setminus \mb(Y)$ be the  non-empty subset of \vars W that is not a part of the Markov boundary of \var Y. 

If there exists no $\var Q\in \vars Q$ that has an m-connecting path $Q\pi_{QY}Y$ to $\var Y$ given $\vars W\setminus \var Q$, then $\vars Q\CI Y|(\vars W\setminus \vars Q)$ in $\graph G_{\overline X}$. Conditioning on \var X cannot open any paths from \var X to \var Y;  therefore,
 $\vars Q\CI Y|X,(\vars W\setminus \vars Q)$ in $\graph G_{\overline X}$. Then by Rule 1 of the do-calculus \citep{Pearl2000}, $P(Y|do(X), \vars W) = P(Y|do(X), \vars W\setminus Q)$, and $\vars Z$ does not satisfy Condition (3) of the causal Markov boundary definition (Contradiction).
 
If there exists a $\var Q\in (\vars W\setminus\mb(Y))$ that has an m-connecting path $Q\pi_{QY}Y$ with $\var Y$ given $\vars Z\setminus \var Q$, then by Lemma \ref{lemma:mppath}, there exists a variable \var W in $\mb(Y)\setminus \vars Z$ such that $\vars Z\cup W$ is also a backdoor set, and $\var W\nCI Y|X,\vars Z$ in $\graph G_{\overline X}$. It is then easy to construct a distribution \var P induced by \graph G  such that 
$P(Y|do(X), \vars Z,W) \neq P(Y|do(X), \vars Z)$ 
(in fact, any distribution for which the post-intervention distribution is faithful to the interventional graph). Thus, \vars Z does not satisfy Condition (2) of the Causal Markov boundary definition (Contradiction).

Thus, \vars Z cannot include any variables that are not in the Markov boundary of \var Y.
\end{proof}

\begin{lemma}\label{lemma:mppath}
Let $\vars Z \subseteq \vars V$ be a backdoor set for $\var X, \var Y$, and let $\var Q\in (\vars Z\setminus\mb(Y))$ that  has an m-connecting path $Q\pi_{QY}Y$ with $\var Y$ given $\vars Z\setminus \var Q$. Then there exists a variable $W\in (\mb(Y)\setminus \vars Z)$ such that:  $\var W\cup \vars Z$ is a backdoor set and  $ W\nCI \var Y|\vars Z$ in  $\graph G_{\overline X}$.
\end{lemma}

\begin{proof}
Let \var Q be a variable as described above. Then there exists a variable $\var W\in\mb(Y)$ between \var Q and \var Y that is a non-collider on $\pi$, otherwise $\var Q\in Pa(Dis(Y))$, and therefore $\var Q\in \mb(Y)$. In addition, $\var W\not \in \vars Z$, otherwise $Q\pi_{QY}Y$ would be blocked given $\vars Z\setminus Q$.  We will now show, by contradiction, that adding \var W to the conditioning set \vars Z does not open any backdoor paths from \var X to \var Y; hence, $\vars Z \cup W$ is a backdoor set. 

Assume that conditioning on $\var W$ opens a path $\pi_{XY}$ between \var X and \var Y that is blocked given just $\vars Z$. Then $W$ must be a descendant one or more colliders on that path. Let $C$ be the collider closest to $X$ on  $\pi_{XY}$ such that $C$ is blocked on $\pi_{XY}$ given \vars Z, but open given $\vars Z\cup W$. Then  $X \pi_{XC} C$ is open given \vars Z, and \var W is a descendant of \var C. Let $C\pi_{CW} W$ be the (possibly empty) directed path from $\var C$ to \var W, and let $W \pi_{WY} Y$ be the subpath of $\pi_{CY}$  from \var W to \var Y. Since $C$ is blocked on $\pi_{XY}$ given \vars Z,  no variable on $\pi_{CW}$ can be in \vars Z. But then   $X\pi_{XC} C \pi_{CW} W \pi_{WY} \ Y$ is an open path from \var X and \var Y given \vars Z in $\graph G_{\overline X}$.  Contradiction, since \vars Z is a backdoor set. Thus, \var W does not open any backdoor paths, and $\vars Z\cup W$ is also a backdoor set. 

Finally, $W$ is not independent of $Y$ given \vars Z in $\graph G_{\overline X}$, since $W \pi_{WY} Y$ is  open given \vars Z. 
\end{proof}

\section{Convergence Proofs for Markov Boundary and Causal Markov Boundary}
\begin{definition}[Conditional Entropy] \label{def:condentropy}
\noindent Let $P$ be the full joint probability distribution over a set of variables $\vars V$, let $Y\in \vars V$ be a variable, and let $\vars Z \subseteq \vars V \setminus \{Y\}$ be a set of  variables. Then, the conditional entropy of $Y$ given $\vars Z$ is defined as follows \citep{cover1999elements}:
\begin{equation}\label{eq:entDisc}
        H(Y|\vars Z) = - \sum_{y}{\sum_{z}{P(y,z)\cdot \log P(y|z)}}
\end{equation}
where $y$ and $z$ denote the values of $Y$ and $\vars Z$, respectively. 
\end{definition}

\begin{lemma}\label{lem:ent}
Let $X,Y \in \vars V$ be two variables and $\vars Z  \subseteq \vars V \setminus \{X,Y\}$ be a set of  variables. Then, $H(Y|\vars Z) \geq H(Y|X,\vars Z)$, where the entropies are defined by Definition \ref{def:condentropy}, and the equality holds if and only if $Y \independent X | \vars Z$.
\end{lemma}
\begin{proof}
Applying the chain rule of entropy, the conditional mutual information can be computed as follows~\citep{cover1999elements}:
\begin{equation}
        I(X; Y|\vars Z) = H(Y|\vars Z) - H(Y|X,\vars Z)  \,.
\end{equation}
Given that the mutual information is nonnegative (i.e., $I(X; Y|\vars Z) \geq 0$) and $I(X; Y|\vars Z) = 0$ if and only if $Y \independent X |  \vars Z$ (see~\citep{cover1999elements}, page 29), it follows that:
\begin{equation}
\begin{split}
        H(Y|\vars Z) - H(Y|X,\vars Z) \geq 0\\
        H(Y|\vars Z)  \geq H(Y|X,\vars Z) 
         \,,
\end{split}
\end{equation}
where the equality holds if and only if $Y\independent X | \vars Z$. 
\end{proof}

\subsection{Markov Boundary}

\noindent\textbf{Assumptions A:}
Let $D_o$ be a dataset that contains $N$ cases on a set of discrete variables $\vars V$ that is sampled from distribution $P$, which is strictly positive as $N \rightarrow \infty$ and is a perfect map for a SMCM $G$. Let $Y \in \vars V$ be an outcome variable and all other variables precede $Y$ in $G$. Let $\vars Z' \subseteq \vars V$ be the Markov boundary of $Y$ (i.e., the Markov blanket with  minimal set of variables) and $\vars Z \subseteq \vars V$ be an arbitrary set.  

\begin{lemma}\label{lem:entMB}
All Markov blankets of $Y$ have the same entropy.
\end{lemma}
\begin{proof}
By definition, $\vars Z'$ is the Markov blanket of $Y$ if and only if $P(Y | \vars Z', \vars W) = P(Y | \vars Z')$ for any $\vars W \subseteq \vars V \setminus \vars Z'$, which indicates that $Y \independent \vars W | \vars Z'$. Also, according to Lemma \ref{lem:ent}, $H(Y|\vars Z') = H(Y|\vars Z', \vars W)$ for any $\vars W \subseteq \vars V \setminus \vars Z'$. Let $\vars Z$  also be a Markov blanket of $Y$. By multiple applications of Lemma \ref{lem:ent}, we obtain:
\begin{equation}
H(Y|\vars Z') = H(Y|\vars Z', \vars V \setminus \vars Z') = H(Y|\vars V)= H(Y|\vars Z, \vars V \setminus \vars Z')=H(Y|\vars Z)
\end{equation}
\end{proof}

\begin{lemma}\label{lem:entZneqZ'}
Let $\vars Z'$ be a Markov blanket of $Y$ and let $\vars Z$   be a set of variables that is not a Markov blanket of $Y$. Then, $H(Y | \vars Z') < H(Y | \vars Z)$, where the entropies are defined by Definition \ref{def:condentropy}.
\end{lemma}

\begin{proof}
Assume there is exists a set $\vars W \subset \vars V \setminus \vars Z$ such that $P(Y | \vars Z, \vars W) \neq P(Y | \vars Z)$. According to Lemma \ref{lem:ent} we have:
\begin{equation}
H(Y | \vars Z, W) < H(Y | \vars Z). 
\end{equation}
Also, given that $\vars V$ is a superset of $(\vars Z\cup \vars W)$, we have:
\begin{equation}
    H(Y | \vars V) \leq  H(Y | \vars Z, \vars W).
\end{equation}
Therefore,
\begin{equation}\label{eq:entVZ'}
    H(Y | \vars V) <  H(Y | \vars Z).
\end{equation}
Also, since $\vars Z'$ is a Markov blanket of $Y$, by Lemma \ref{lem:entMB} we have:
\begin{equation}\label{eq:entVZ}
    H(Y | \vars Z') =  H(Y | \vars V).
\end{equation}
Combining Equations (\ref{eq:entVZ'}) and (\ref{eq:entVZ}), we obtain:
\begin{equation}\label{eq:entzz'}
    H(Y | \vars Z') <  H(Y | \vars Z).
\end{equation}
\end{proof}

\begin{lemma}\label{lem:BDMB}
Given \textbf{Assumptions A}, the BD score for $\log P(D_o|\vars Z)$ is defined as follows in the large sample limit:
\begin{equation}\label{eq:BDMB}
\lim_{N \rightarrow \infty} \log P(D_o|\vars Z) = \lim_{N \rightarrow \infty} -N \cdot H(Y|\vars Z) - \frac{q \cdot (r-1)}{2} \log N + const. ,
\end{equation}
\end{lemma}
\begin{proof}
The BD score for $P(D_o|\vars Z)$ is calculated as follows~\citep{heckerman1995learning}:
\begin{equation}\label{eq:BD}
 P(D_o|\vars Z) =  \prod_{j=1}^{q}{\frac{\mathrm{\Gamma} (\alpha_{j})}{\mathrm{\Gamma} (\alpha_{j}+N_{j})} \cdot \displaystyle \prod_{k=1}^{r}\frac{\mathrm{\Gamma} (\alpha_{jk}+N_{jk})}{\mathrm{\Gamma} (\alpha_{jk})}} \,,
\end{equation}
where $q$ denotes instantiations of variables in $\vars Z$ and $r$ denotes values of variable $Y$. The term $N_{jk}$  is the number of cases in data in which variable $Y = k$ and its parent $\vars Z=j$; also, $N_{j}=\sum_{k=1}^{r}{N_{jk}}$.  The term $\alpha_{jk}$ is a finite positive real number that is called Dirichlet prior parameter and may be interpreted as representing ``pseudo-counts'', where $\alpha_{j} = \sum_{k=1}^{r}{\alpha_{jk}}$. BD can be re-written in $log$ form as follows:
\begin{equation}\label{eq:logIS}
 \log  P(D_o|\vars Z) =
 \sum_{j=1}^{q}{\left[{\log \Gamma (\alpha_{j})}-{\log\Gamma (\alpha_{j}+N_{j})} + \sum_{k=1}^{r}{\left[\log\Gamma (\alpha_{jk}+N_{jk})-\log\Gamma (\alpha_{jk})\right]}\right]}  .
\end{equation}
We can re-arrange the terms in Equation (\ref{eq:logIS}) to gather the constant terms as follows:\vspace{-0.5mm}
\begin{equation}\label{eq:logIS2}
\begin{split}
    \log P(D_o|\vars Z) & = \sum_{j=1}^{q}{\left[ -{\log\Gamma (\alpha_{j}+N_{j})} + \sum_{k=1}^{r}{\log \mathrm{\Gamma} (\alpha_{jk} + N_{jk})}\right]} + \sum_{j=1}^{q}{\left[\log \mathrm{\Gamma} (\alpha_{j})  - \sum_{k=1}^{r}{\log \mathrm{\Gamma} (\alpha_{jk})}\right]} \\
    & = \sum_{j=1}^{q}{\left[ -{\log\Gamma (\alpha_{j}+N_{j})} + \sum_{k=1}^{r}{\log \mathrm{\Gamma} (\alpha_{jk} + N_{jk})}\right]} + const.
\end{split}
\raisetag{6ex}
\end{equation}

Using the Stirling’s approximation of $\lim_{n \rightarrow \infty} \log \mathrm{\Gamma}(n) = (n-\frac{1}{2}) \log (n) - n + const.$, we can re-write Equation (\ref{eq:logIS2}) as follows:

\begin{equation}\label{eq:logIS_Strling}
\begin{split}
  & \lim_{N \rightarrow \infty} \log P(D_o|\vars Z) \\
  &= \lim_{N \rightarrow \infty} \sum_{j=1}^{q}\Biggl[ - (\alpha_{j}+N_{j} - \frac{1}{2}) \log (\alpha_{j}+N_{j}) + (\alpha_{j}+N_{j}) 
 + \sum_{k=1}^{r}{\left((\alpha_{jk} + N_{jk} - \frac{1}{2}) \log (\alpha_{jk} + N_{jk})- (\alpha_{jk} + N_{jk})\right)}  \Biggl] + const. \\
&= \lim_{N \rightarrow \infty}\sum_{j=1}^{q} \Biggl[ - \alpha_{j} \log (\alpha_{j} + N_{j}) - N_{j} \log (\alpha_{j} + N_{j})  + \frac{1}{2} \log (\alpha_{j} + N_{j}) + \alpha_{j} + N_{j}\\ & + \sum_{k=1}^{r}  {\left( \alpha_{jk} \log (\alpha_{jk} + N_{jk}) + N_{jk} \log (\alpha_{jk} + N_{jk}) - \frac{1}{2} \log (\alpha_{jk} + N_{jk}) - \alpha_{jk} - N_{jk}\right)}
 \Biggl] + const. \\
  &= \lim_{N \rightarrow \infty} \sum_{j=1}^{q} \Biggl[ - N_{j} \log (\alpha_{j} + N_{j})+ \sum_{k=1}^{r}  N_{jk} \log (\alpha_{jk} + N_{jk}) \Biggl] + \sum_{j=1}^{q} \Biggl[ -\alpha_{j} \log (\alpha_{j} + N_{j}) + \sum_{k=1}^{r} \alpha_{jk} \log (\alpha_{jk} + N_{jk}) \Biggl] \\
  &+ \frac{1}{2} \sum_{j=1}^{q} \Biggl[  \log (\alpha_{j} + N_{j})  - \sum_{k=1}^{r}  \log (\alpha_{jk} + N_{jk})
   + \alpha_{j}  + N_{j} - \sum_{k=1}^{r} \left(\alpha_{jk} + N_{jk}\right)\Biggl] + const. \\
  &= \lim_{N \rightarrow \infty} \sum_{j=1}^{q} \Biggl[ - N_{j} \log (\alpha_{j} + N_{j})+ \sum_{k=1}^{r}  N_{jk} \log (\alpha_{jk} + N_{jk}) \Biggl]
  + \sum_{j=1}^{q} \Biggl[ -\alpha_{j} \log (\alpha_{j} + N_{j}) + \sum_{k=1}^{r} \alpha_{jk} \log (\alpha_{jk} + N_{jk}) \Biggl] \\
  &+ \frac{1}{2} \sum_{j=1}^{q} \Biggl[  \log (\alpha_{j} + N_{j})  - \sum_{k=1}^{r}  \log (\alpha_{jk} + N_{jk})
 \Biggl] + const.
 \end{split}
\raisetag{6ex}
 \end{equation}
 In the last step of Equation (\ref{eq:logIS_Strling}), we used the facts that 
$\sum_{k=1}^{r}{N_{jk}}=N_{j}$ and $\sum_{k=1}^{r}{\alpha_{jk}}=\alpha_{j}$, and we  applied these identities again to that equation to obtain the following: 
\begin{equation}\label{eq:logISStrling2}
\begin{split}
 & \lim_{N \rightarrow \infty} \log  P(D_o|\vars Z) =\\
 & \lim_{N \rightarrow \infty} \sum_{j=1}^{q} \sum_{k=1}^{r} \Biggl[ N_{jk} \log (\frac{\alpha_{jk} + N_{jk}}{\alpha_{j} + N_{j}}) + \alpha_{jk} \log (\frac{\alpha_{jk} + N_{jk}}{\alpha_{j} + N_{j}}) \Biggl] 
+ \frac{1}{2} \sum_{j=1}^{q}  \Biggl[ \log (\alpha_{j} + N_{j}) - \sum_{k=1}^{r}  \log (\alpha_{jk} + N_{jk})  \Biggl] + const. 
 \end{split}
 \end{equation}

\noindent Given that
 \begin{equation*}
     \lim_{N \rightarrow \infty} \frac{\alpha_{jk} + N_{jk}}{\alpha_{j}+N_{j}} = \frac{N_{jk}}{N_{j}}
 \end{equation*}
and 
 \begin{equation*}
     \lim_{N \rightarrow \infty} \sum_{j=1}^{q} \sum_{k=1}^{r} \alpha_{jk} \log (\frac{\alpha_{jk}+N_{jk}}{\alpha_{j}+N_{j}}) = const. ,
 \end{equation*}
in the limit, Equation (\ref{eq:logISStrling2}) becomes:
\begin{equation}
\lim_{N \rightarrow \infty} \log  P(D_o|\vars Z) = \lim_{N \rightarrow \infty} \sum_{j=1}^{q} \sum_{k=1}^{r}  N_{jk} \log \frac{N_{jk}}{N_{j}} + \frac{1}{2} \sum_{j=1}^{q}  \Biggl[ \log (\alpha_{j}+N_{j}) - \sum_{k=1}^{r}  \log (\alpha_{jk}+N_{jk})  \Biggl] + const.  ,
\end{equation}
or equivalently:
\begin{equation}\label{eq:logISStrlinbf}
\begin{split}
\lim_{N \rightarrow \infty} \log P(D_o|\vars Z) &=
\lim_{N \rightarrow \infty} N \cdot \sum_{j=1}^{q} \sum_{k=1}^{r}  \frac{N_{jk}}{N} \log \frac{N_{jk}}{N_{j}} + \frac{1}{2} \sum_{j=1}^{q}  \Biggl[ \log (\alpha_{j}+N_{j}) - \sum_{k=1}^{r}  \log (\alpha_{jk}+N_{jk})  \Biggl] + const.\\
  & =\lim_{N \rightarrow \infty} -N \cdot H(Y|\vars Z) + \frac{1}{2} \sum_{j=1}^{q}  \Biggl[ \log (\alpha_{j}+N_{j}) - \sum_{k=1}^{r}  \log (\alpha_{jk}+N_{jk})  \Biggl] + const.
\end{split}
\end{equation}
To simplify the second term in Equation (\ref{eq:logISStrlinbf}), we divide the arguments in the log terms by $N$ and equivalently add $\log N$ terms as follows:
\begin{equation}\label{eq:logISStrlingSecondTerm}
\begin{split}
\lim_{N \rightarrow \infty}\frac{1}{2}  \sum_{j=1}^{q}  \Biggl[ \log (\alpha_{j}+N_{j}) &- \sum_{k=1}^{r}  \log (\alpha_{jk}+N_{jk})  \Biggl]=\lim_{N \rightarrow \infty} \frac{1}{2} \sum_{j=1}^{q}  \Biggl[ \log (\frac{\alpha_{j}+N_{j}}{N}) + \log N - \sum_{k=1}^{r}  \log (\frac{\alpha_{jk}+N_{jk}}{N}) +\log N  \Biggl] \\
&= \lim_{N \rightarrow \infty} \frac{1}{2} \sum_{j=1}^{q}   \left( \log N - \sum_{k=1}^{r}  \log N  \right) + \frac{1}{2} \sum_{j=1}^{q}  \Biggl[ \log (\frac{\alpha_{j}+N_{j}}{N}) - \sum_{k=1}^{r} \log (\frac{\alpha_{jk}+N_{jk}}{N}) \Biggl] \\
& = -\frac{q(r - 1)}{2} \log N + const. 
\end{split}
\end{equation}
Combining  Equations (\ref{eq:logISStrlinbf}) and (\ref{eq:logISStrlingSecondTerm}), we obtain:
\begin{equation}\label{eq:logISStrling3}
\lim_{N \rightarrow \infty} \log P(D_o|\vars Z) =\lim_{N \rightarrow \infty} -N \cdot H(Y|\vars Z')  -\frac{q \cdot (r- 1)}{2} \log N + const.
\end{equation}
\end{proof}


\begin{theorem} \label{the:fgesmb}
Given \textbf{Assumptions A}, the BD score~\citep{heckerman1995learning} will assign the highest score to the Markov boundary of $Y$, denoted by $\vars Z'$, in the large sample limit:

\begin{equation}\label{eq:bscd}
  \lim_{N \rightarrow \infty}\frac{P(D_o|\vars Z)}{P(D_o| \vars Z')} = \begin{cases}
             1 & \text{iff $\vars Z$ is a Markov boundary of $Y$}\\
             0 & \text{otherwise} 
          \end{cases}  \enspace ,
\end{equation}
\end{theorem}

\begin{proof}
Applying Lemma \ref{lem:BDMB} we have:
\begin{equation}\label{eq:ratio}
\lim_{N \rightarrow \infty}\log \frac{P(D_o|\vars Z)}{P(D_o| \vars Z')} = \lim_{N \rightarrow \infty}N \cdot \left[ H(Y|\vars Z')- H(Y|\vars Z)\right]  + \frac{(q^{\prime} - q) \cdot (r-1)}{2} \log N .
\end{equation}
where $q$ and $q'$ are the number of possible parent instantiations of $Y$ with $\vars Z$ and $\vars Z'$ as the set of parents. There are three possible cases:\\

\textbf{Case 1:} $\vars Z$ is a Markov blanket of $Y$ and its Markov boundary.

Since both $\vars Z'$ and $\vars Z$ are Markov blankets of $Y$, $H(Y|\vars Z)=H(Y|\vars Z')$ by Lemma~\ref{lem:entMB}. Thus, the first term in Equation (\ref{eq:ratio}) becomes $0$.
Also, given that $\vars Z'$ and $\vars Z$ are Markov boundaries, they have the same number of parameters $q'= q$, by which the second term in Equation (\ref{eq:ratio}) becomes $0$. Therefore, Equation (\ref{eq:case1}) goes to $0$ in the limit as $N \rightarrow \infty$, or equivalently:
\begin{equation}\label{eq:case1}
\lim_{N \rightarrow \infty} \frac{P(D_o|\vars Z)}{P(D_o| \vars Z')} =1 .
\end{equation}
\\

\textbf{Case 2:} $\vars Z$ is a Markov blanket of $Y$ but not its Markov boundary.

According to Lemma~\ref{lem:entMB} $H(Y|\vars Z)=H(Y|\vars Z')$; therefore, the first term in Equation (\ref{eq:ratio}) becomes $0$ and we obtain:
\begin{equation}\label{eq:case2}
\lim_{N \rightarrow \infty}\frac{P(D_o|\vars Z)}{P(D_o| \vars Z')} = \lim_{N \rightarrow \infty} \frac{(q^{\prime} - q) \cdot (r-1)}{2} \log N .
\end{equation}
Given that $\vars Z'$ is the Markov boundary with minimum number of variables, and therefore, minimum number of parameters $q^{\prime} < q$. Thus, the term $(q^{\prime}-q)$ becomes a negative constant. Also, the term $\frac{(r-1)}{2}$ is a positive constant. Consequently, Equation (\ref{eq:case2}) goes to $-\infty$ in the limit as $N \rightarrow \infty$, which implies:
\begin{equation}\label{eq:case2_1}
\lim_{N \rightarrow \infty} \frac{P(D_o|\vars Z)}{P(D_o| \vars Z')} = 0
\end{equation}
\\

\textbf{Case 3:} $\vars Z$ is not a Markov blanket of $Y$.

The first term in Equation (\ref{eq:ratio}) is of $O(N)$ and dominates the second term, which is  $O(\log N)$. 
According to Lemma \ref{lem:entZneqZ'}, $H(Y|\vars Z') < H(Y|\vars Z)$; thus, the term $H(Y|\vars Z') -H(Y|\vars Z)$ becomes a negative number. As a result, Equation (\ref{eq:ratio}) becomes $- \infty$, which is equivalent to:
\begin{equation}\label{eq:case3}
\lim_{N \rightarrow \infty} \frac{P(D_o|\vars Z)}{P(D_o| \vars Z')} = 0.
\end{equation}
\end{proof}

\bibliography{biblio.bib}